\documentclass[11pt]{article}
\usepackage[margin=1in]{geometry}
\usepackage{cite}
\usepackage{booktabs}
\usepackage[font=footnotesize,labelfont=normalfont,textfont=normalfont,labelsep=period,justification=raggedright,singlelinecheck=false]{caption}
\usepackage{xspace}
\usepackage{amsmath,amssymb,amsfonts}
\usepackage{algorithm}
\usepackage{algpseudocode}
\usepackage{graphicx}
\usepackage{textcomp}
\usepackage{xcolor}
\usepackage{amsthm}
\usepackage{url}
\usepackage{tikz}
\usetikzlibrary{positioning,arrows.meta,fit,backgrounds,calc,decorations.pathreplacing,topaths,hobby,cd}
\usepackage{pgfplots}
\pgfplotsset{compat=1.18}
\usepgfplotslibrary{groupplots}

\newcommand{\minb}[1]{\hat{#1}}

\newcommand{\fd}{\mathrm{fd}}

\renewcommand\epsilon\varepsilon
\renewcommand\phi\varphi

\newcommand{\RR}{\mathbb{R}}
\newcommand{\NN}{\mathbb{N}}
\newcommand{\ZZ}{\mathbb{Z}}

\newtheorem{proposition}{Proposition}[section]

\newtheorem{theorem}{Theorem}[section]

\newtheorem{definition}{Definition}[section]
\newtheorem{lemma}{Lemma}[section]

\newtheoremstyle{uprightthm}%
  {\topsep}   
  {\topsep}   
  {\normalfont}  
  {}          
  {\bfseries} 
  {.}         
  { }         
  {}          
\theoremstyle{uprightthm}
\newtheorem{example}{Example}[section]
\usepackage{tcolorbox}
\tcbuselibrary{breakable,skins}
\tcolorboxenvironment{example}{
  enhanced jigsaw, breakable,
  colback=white, colframe=black!60, boxrule=0.4pt, sharp corners,
  left=4pt, right=4pt, top=3pt, bottom=3pt,
  before skip=8pt plus 2pt, after skip=8pt plus 2pt
}

\usepackage{microtype}
\usepackage{soul}
\newif\ifchg \chgfalse
\newcommand{\chg}[1]{\ifchg{\sethlcolor{yellow}\hl{#1}}\else#1\fi}
\soulregister\cite7 \soulregister\ref7 \soulregister\cref7
\usepackage[colorlinks=true,linkcolor=blue!50!black,citecolor=blue!50!black,urlcolor=blue!50!black]{hyperref}
\newcommand{\netgraphwidth}{0.62\textwidth}
\newcommand{\mlbr}{}
\newcommand{\mleq}{=}
\renewenvironment{multline*}{\csname gather*\endcsname}{\csname endgather*\endcsname}
\newenvironment{IEEEkeywords}%
  {\par\medskip\noindent\textbf{Keywords: }}{\par\medskip}

\begin{document}
\bibliographystyle{IEEEtran}

\title{\texorpdfstring{\chg{M-Fibration Theory with Applications to Weighted Graphs}}{M-Fibration Theory with Applications to Weighted Graphs}}
\author{Paolo Boldi\\[-1pt]
  \small Dipartimento di Informatica, Universit\`a degli Studi di Milano, Milan, Italy\\
  \small \texttt{paolo.boldi@unimi.it}
  \and \chg{Osvaldo M. Velarde}\\[-1pt]
  \small \chg{Levich Institute and Physics Department,}\\[-2pt]
  \small \chg{City College of New York, New York, NY, USA}
  \and \chg{Hern\'an A. Makse}\\[-1pt]
  \small \chg{Levich Institute and Physics Department,}\\[-2pt]
  \small \chg{City College of New York, New York, NY, USA}}
\date{}

\maketitle

\begin{abstract}
  The purpose of this paper is to provide a general, comprehensive, theoretical framework that allows one to deal with fibrations on graphs labelled on a commutative monoid. This is a genuine extension of the theory of graph fibrations (as introduced in~\cite{BoVGF}), that makes it possible to deal with weighted graphs, and also graphs labelled with other algebraic structures. The derived theory also lends itself naturally to consider approximate fibrations. \chg{As an example, we show how the derived theory can be applied to the reduction of weighted networks, providing a strong theoretical underpinning to recent empirical results~\cite{VePRFS,VePHM}.}
\end{abstract}

\begin{IEEEkeywords}
Network analysis, fibrations, \chg{weighted graphs}
\end{IEEEkeywords}

\section{Introduction and Motivations}

Graph fibrations were formally described in~\cite{BoVGF}, derived from the much more general definition by Grothendieck~\cite{GroTDE}. The main motivation, at the time, was to have a formal tool to describe computations in distributed systems. Similar notions had already been used in very different settings, with different goals and under different names: equitable partitions~\cite{SchCCP,GoRAGT}, the Weisfeiler--Leman test~\cite{WeLRGC}, colour refinement~\cite{CoGEAG}, groupoids~\cite{GoSDBN}.

While fibrations are still used in the context of distributed computing (see, e.g.,~\cite{DiVOCA}), in the last decades the fibration machinery was borrowed in other, quite different, playgrounds as means to describe symmetries in general networks: specifically, in the theory of dynamical systems~\cite{GoSTPS,GoSDBN,JolGBS} and in biology~\cite{MoLFSU,StRDBG,MBFSbook}.

Although the theory of fibrations is elegant and simple, there are two important shortcomings that make its usage difficult  in these more general contexts. The first is that it is challenging to adapt fibrations to weighted graphs, or more generally to graphs with labels that can be algebraically composed; the theory, as described in~\cite{BoVGF}, allows for labels, but treats them in a ``static'' way: labels can exist on arcs, but they must be preserved exactly by morphisms. The second problem is that the presence of errors or noise is incompatible with the theory itself. One extra, or one missing arc, or even an arc with a weight that is slightly off, and everything breaks.

These problems can be overcome with \emph{ad hoc} approaches, and in fact fibrations have been successfully applied also in these more general contexts~\cite{StRDBG,BoLMQF,VePRFS}. Nonetheless, a well-grounded theoretical foundation behind their use is still missing.

\smallskip
The purpose of this paper is to provide a general, comprehensive, theoretical framework that allows one to deal with fibrations on graphs labelled on a commutative monoid. The idea is similar to that of weighted bisimulation of weighted automata~\cite{BucBRW}, and akin to exact lumpability of Markov chains~\cite{KeSFMC,BucEOL}, which is the special case in which labels are non-negative reals under addition; other instances of the same idea appear as equitable partitions of weighted graphs and their role in the spectral theory of graphs~\cite{SchCCP,GoRAGT} and, more recently, in the analysis of the expressive power of message-passing graph neural networks~\cite{MoRFWL,XuHLJ}, and as a tool for the lossless dimension reduction of linear programs~\cite{GrKMS}. What these approaches have in common is a partition-refinement algorithm~\cite{PaiTPR}; what they lack is a common conceptual framework in which the objects being computed (bases and fibrations, rather than partitions) are first-class citizens, together with their universal properties. Providing such a framework is the purpose of this paper.

As an example of application, we show how the derived theory can be applied to the compression of arbitrary neural networks (including CNNs), providing a strong theoretical underpinning to the recent results in~\cite{VePRFS}.

\section{Notations and Preliminaries}\label{sec:notations}

\paragraph{Equivalence relations and partitions.} An \emph{equivalence relation} on a set $X$ is a binary relation $\sim$ that is symmetric, reflexive and transitive. A \emph{partition} of a set $X$ is a collection $\Pi$ of pairwise-disjoint non-empty subsets of $X$ whose union is $X$ itself; the elements of $\Pi$ are called the \emph{parts} of the partition.
Partitions and equivalence relations are in one-to-one correspondence: given an equivalence relation $\sim$ on $X$, the set of equivalence classes of $\sim$ is a partition of $X$, denoted by $\Pi_\sim$, and given a partition $\Pi$ of $X$, the relation $\sim_\Pi$ defined by $x\sim_\Pi y$ if and only if $x$ and $y$ belong to the same part of $\Pi$ is an equivalence relation on $X$.

An equivalence relation (or partition) $\sim_1$ is said to be \emph{finer} than an equivalence relation $\sim_2$ (or, equivalently, $\sim_2$ is \emph{coarser} than $\sim_1$) if $x\sim_1 y$ implies $x\sim_2 y$ for all $x,y\in X$. 

\paragraph{Graphs and graph morphisms.} A \emph{graph} $G$ is defined by $(N_G,A_G,s_G,t_G)$, where $N_G$ is a set of \emph{nodes}, $A_G$ is a set of \emph{arcs}, and $t_G,s_G: A_G \to N_G$ provide the source and target node of each arc; in this paper, we only consider finite graphs, i.e., graphs with a finite set of nodes and arcs. In the following, all subscripts are omitted when they are clear from the context.

We write $G(x,y)$ for the set of arcs from $x$ to $y$, i.e., $G(x,y)=s^{-1}(x)\cap t^{-1}(y)$, and extend this notation to sets of nodes $X,Y\subseteq N_G$ by writing $G(X,Y)=\bigcup_{x\in X,y\in Y} G(x,y)$. We also use $-$ instead of $N_G$, so for instance $G(-,y)$ is the set of arcs ending in $y$.

A \emph{simple} graph is a graph with no parallel arcs, i.e., such that $|G(x,y)|\le 1$ for all $x,y\in N_G$.

A \emph{graph morphism} $f: G \to H$ is a pair of functions $f_N: N_G \to N_H$ and $f_A: A_G \to A_H$ such that the following diagrams commute:
\[
\begin{tikzcd}
  A_G \arrow[r, "f_A"] \arrow[d, "s_G"'] & A_H \arrow[d, "s_H"] & A_G \arrow[r, "f_A"] \arrow[d, "t_G"'] & A_H \arrow[d, "t_H"] \\
  N_G \arrow[r, "f_N"] & N_H & N_G \arrow[r, "f_N"] & N_H
\end{tikzcd}
\]
A graph morphism $f: G \to H$ is \emph{surjective} (also called \emph{epimorphic}) (\emph{injective}, \emph{bijective}, resp.) if both $f_N$ and $f_A$ are surjective (injective, bijective, resp.) functions.

\section{$M$-graphs and $M$-fibrations}

A (standard) \emph{graph fibration}~\cite{BoVGF} is a graph morphism $\phi: G \to H$ such that for every arc $\bar a\in A_H$ and node $x\in \phi^{-1}(t(\bar a))$ there exists a unique $a\in \phi^{-1}(\bar a)\cap G(-,x)$. This condition is extremely rigid: while the whole theory of graph fibrations is very elegant and has many applications, it is not flexible enough to be applied to weighted or labelled neural networks, unless we just assume that morphisms simply preserve the labels/weights, which is usually something we don't want. In order to overcome this limitation, we introduce the notion of $M$-fibrations: let us start with the definition of $M$-graph.

\begin{definition}[$M$-graph] 
  Given a commutative monoid $(M,\oplus,0)$, an \emph{$M$-graph} is a graph $G$ together with a function $\lambda_G: A_G \to M$ that assigns a label in $M$ to each arc of $G$. 
\end{definition}

From now on, $M$ will always denote a commutative monoid (sometimes enjoying further properties). 
In Figure~\ref{fig:example}, you can see two (simple) $M$-graphs: the monoid $M$ is intentionally left unspecified. Note that here the underlying graph is the same, but the labelling is not.

\begin{figure}
  \centering
    \begin{tabular}{ccc}
      \resizebox{0.2\textwidth}{!}{
    \begin{tikzpicture}[
      n/.style={circle, draw, fill=white, minimum size=6pt, inner sep=3pt},
      >={Stealth[length=5pt]}
    ]
    \node[n] (n0)    at (-1.7,2.2)     {$0$};
    \node[n] (n1)    at (1.7,2.2)     {$1$};
    \node[n] (n2) at (0,0.6)     {$2$};
    \node[n] (n3)     at (-1.7,-1.3) {$3$};
    \node[n] (n4)     at ( 1.7,-1.3) {$4$};

    \draw[->] (n0) to[bend left=25] node[pos=0.5,above] {$5$} (n1);
    \draw[->] (n1) to[bend left=25] node[pos=0.5,above] {$5$} (n0);
    \draw[->] (n0) -- (n2) node[midway,right] {15};
    \draw[->] (n1) -- (n2) node[midway,left] {23};
    \draw[->] (n2) -- (n3) node[midway,right] {30};
    \draw[->] (n2) -- (n4) node[midway,left] {30};
    \draw[->] (n3) to[bend left=35]  node[pos=0.5, left, xshift=-3pt] {$8$} (n0);
    \draw[->] (n3) to[bend right=35]  node[pos=0.5, right, xshift=3pt] {$2$} (n0);
    \draw[->] (n4) to[bend right=25] node[pos=0.5, above, xshift=6pt] {$10$} (n1);
  \end{tikzpicture}} & $\quad$ &
      \resizebox{0.2\textwidth}{!}{
    \begin{tikzpicture}[
      n/.style={circle, draw, fill=white, minimum size=6pt, inner sep=3pt},
      >={Stealth[length=5pt]}
    ]
    \node[n] (n0)    at (-1.7,2.2)     {$0$};
    \node[n] (n1)    at (1.7,2.2)     {$1$};
    \node[n] (n2) at (0,0.6)     {$2$};
    \node[n] (n3)     at (-1.7,-1.3) {$3$};
    \node[n] (n4)     at ( 1.7,-1.3) {$4$};

    \draw[->] (n0) to[bend left=25] node[pos=0.5,above] {$5$} (n1);
    \draw[->] (n1) to[bend left=25] node[pos=0.5,above] {\bf 6} (n0);
    \draw[->] (n0) -- (n2) node[midway,right] {15};
    \draw[->] (n1) -- (n2) node[midway,left] {23};
    \draw[->] (n2) -- (n3) node[midway,right] {30};
    \draw[->] (n2) -- (n4) node[midway,left] {30};
    \draw[->] (n3) to[bend left=35]  node[pos=0.5, left, xshift=-3pt] {$8$} (n0);
    \draw[->] (n3) to[bend right=35]  node[pos=0.5, right, xshift=3pt] {$2$} (n0);
    \draw[->] (n4) to[bend right=25] node[pos=0.5, above, xshift=6pt] {\bf 9} (n1);
  \end{tikzpicture}} \\
      $G$ && $H$
    \end{tabular}
  \caption{\label{fig:example}Two graphs with labels on some monoid (for the moment, we do not specify the monoid operations). The underlying graph is the same, but the labelling is slightly different (in boldface, the labels that change).}
\end{figure}

\begin{definition}[$M$-fibration] 
  An \emph{$M$-fibration} is a graph morphism $\phi: G \to H$ between two $M$-graphs such that for every arc $\bar a\in A_H$ and node $x\in \phi^{-1}(t(\bar a))$ we have
  \[
      \lambda(\bar a)=\bigoplus_{a\in \phi^{-1}(\bar a)\cap G(-,x)} \lambda(a).
  \]
\end{definition}

It is easy to see that standard graph fibrations $\phi: G \to B$ are a special case of $M$-fibrations, where the monoid is $(\mathbb{N},+,0)$ and all arcs in both $G$ and $B$ have label $1$. In this case, the condition for being an $M$-fibration reduces to the condition for being a standard graph fibration (the set over which the sum is taken must have exactly one element).

Some basic properties of $M$-fibrations are the following:
\begin{proposition}
  \label{prop:comp-bij}
  \begin{enumerate}
    \item\label{enu:comp} The composition of two $M$-fibrations is an $M$-fibration.
    \item\label{enu:bij} A bijective $M$-fibration is an isomorphism of $M$-graphs (i.e., a bijective graph morphism that preserves the labels of arcs).
  \end{enumerate}
\end{proposition}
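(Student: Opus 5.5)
For part~\ref{enu:comp}, I would take $M$-fibrations $\phi: G\to H$ and $\psi: H\to K$ and verify the defining condition for $\psi\circ\phi$ directly. That $\psi\circ\phi$ is a graph morphism is standard, since the commuting squares compose. Fix an arc $\bar{\bar a}\in A_K$ and a node $x\in(\psi\circ\phi)^{-1}(t(\bar{\bar a}))$, and set $y=\phi(x)$, so that $y\in\psi^{-1}(t(\bar{\bar a}))$.

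The key step is to partition the index set
\[
(\psi\circ\phi)^{-1}(\bar{\bar a})\cap G(-,x)=\bigcup_{\bar a\in\psi^{-1}(\bar{\bar a})\cap H(-,y)}\bigl(\phi^{-1}(\bar a)\cap G(-,x)\bigr).
\]
This union is disjoint because $\phi_A$ is a function. It covers the left-hand side because any arc $a$ on the left has $t(\phi(a))=\phi(t(a))=y$, so $\phi(a)\in H(-,y)$ and $\phi(a)\in\psi^{-1}(\bar{\bar a})$. Since $M$ is commutative and all sets are finite, the monoid sum splits along this partition. Applying the fibration condition for $\phi$ to each inner sum (legitimate since $x\in\phi^{-1}(t(\bar a))$ when $t(\bar a)=y$) turns each inner sum into $\lambda(\bar a)$. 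Applying the condition for $\psi$ at $y$ then gives $\lambda(\bar{\bar a})$. This reindexing is the only point needing care, mostly checking that the partition is exact; associativity and commutativity make the regrouping valid.

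For part~\ref{enu:bij}, I would let $\phi: G\to H$ be a bijective $M$-fibration. Its inverse $(\phi_N^{-1},\phi_A^{-1})$ is a graph morphism, since inverting the commuting squares preserves commutativity. It therefore suffices to show that $\phi$ preserves labels. Given $a\in A_G$, set $\bar a=\phi(a)$ and $x=t(a)$. Then $\phi^{-1}(\bar a)=\{a\}$ by injectivity, and $a\in G(-,x)$ with $x\in\phi^{-1}(t(\bar a))$. The fibration condition thus reduces to a one-term sum, giving $\lambda(\bar a)=\lambda(a)$. Here surjectivity is needed only so that the inverse exists. The inverse preserves labels automatically, so $\phi$ is an isomorphism of $M$-graphs.
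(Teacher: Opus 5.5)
Your proposal is correct and follows essentially the same route as the paper. In part~1 the paper performs exactly your regrouping of the sum over $(\psi\circ\phi)^{-1}(\bar a)\cap G(-,x)$ into inner sums over $\phi^{-1}(b)\cap G(-,x)$ for $b\in\psi^{-1}(\bar a)\cap H(-,\phi(x))$ (you additionally spell out why this is a genuine partition), and in part~2 it likewise observes that bijectivity reduces the defining sum to a single term, giving $\lambda(\bar a)=\lambda(\phi^{-1}(\bar a))$.
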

\begin{proof}
  \noindent\ref{enu:comp}. Let $\phi: G \to H$ and $\psi: H \to K$ be two $M$-fibrations. Let $\bar a\in A_K$ and $x\in (\psi\circ \phi)^{-1}(t(\bar a))$. Then we have:
  \begin{multline*}
    \bigoplus_{a\in (\psi\circ \phi)^{-1}(\bar a)\cap G(-,x)} \lambda(a)=\mlbr
    \bigoplus_{b\in \psi^{-1}(\bar a)\cap H(-,\phi(x))} 
    \bigoplus_{a\in \phi^{-1}(b)\cap G(-,x)}
    \lambda(a)=\mlbr
    \bigoplus_{b\in \psi^{-1}(\bar a)\cap H(-,\phi(x))} \lambda(b)=\lambda(\bar a).
  \end{multline*}

  \noindent\ref{enu:bij}. Let $\phi: G \to H$ be a bijective $M$-fibration. Then for every arc $\bar a\in A_H$ and node $x\in \phi^{-1}(t(\bar a))$ we have 
  \[
      \lambda(\bar a)=\bigoplus_{a\in \phi^{-1}(\bar a)\cap G(-,x)} \lambda(a).
  \]
  But since $\phi$ is bijective, there is exactly one node $x \in \phi^{-1}(t(\bar a))$ and the set $\phi^{-1}(\bar a)\cap G(-,x)$ has exactly one element, so the last formula reduces to $\lambda(\bar a)=\lambda(\phi^{-1}(\bar a))$.
\end{proof}

\section{Minimum base of an $M$-graph}\label{sec:min-base}

The theory of graph fibrations~\cite{BoVGF} ensures that every graph has a minimum base, i.e., for every graph $G$ there exists an epimorphic fibration $\phi: G \to B$ whose base $B$ is fibration prime (it cannot be fibred non-trivially), and $B$ is unique up to isomorphism; moreover, all such minimal fibrations have the same node component (up to isomorphism of the base), although they may differ on arcs. In this section we prove that the same result holds for $M$-fibrations, in fact in a stronger form: since minimum bases turn out to be simple graphs, minimal $M$-fibrations are unique up to isomorphism.

\begin{definition}[$M$-equitable partition]
  Given an $M$-graph $G$, an \emph{$M$-equitable partition} of $G$ is a partition $\Pi$ of $N_G$ such that for every two nodes $x,y\in N_G$ such that $x \sim_\Pi y$ and every part $C\in \Pi$ we have
  \[
      W(C,x)=W(C,y)
  \]
  where $W(A,z)=\bigoplus_{a\in G(A,z)} \lambda(a)$ is the total weight of the arcs from $A$ to $z$. When we have an equitable partition, given two parts $C,D\in \Pi$, we write $W(C,D)$ for the common value of $W(C,x)$ for all $x\in D$.
\end{definition}

The following proposition shows that $M$-fibrations and $M$-equitable partitions are two sides of the same coin: given an $M$-fibration, the equivalence relation on nodes defined by the fibers of the fibration is an equitable partition, and given an equitable partition, the canonical projection to the quotient graph is an $M$-fibration.

\begin{proposition}
  \label{prop:fib-equi}
  \begin{enumerate}
    \item \label{enu:fib-equi} Let $\phi: G \to H$ be an epimorphic $M$-fibration: then the partition $\Pi_\sim$ associated with the equivalence relation $\sim$ defined by $x\sim y$ if and only if $\phi(x)=\phi(y)$ is an $M$-equitable partition of $G$.
    \item \label{enu:equi-fib} Let $\Pi$ be an $M$-equitable partition of $G$, and let $G/\Pi$ be the simple $M$-graph obtained as the quotient of $G$ by $\Pi$ (its nodes are the parts of $\Pi$, and there is an arc $(C,D)$, labelled by $W(C,D)$, if and only if $G(C,D)\neq\emptyset$); let $\pi: G \to G/\Pi$ be the canonical projection to the quotient graph then $\pi$ is an epimorphic $M$-fibration.
  \end{enumerate}
\end{proposition}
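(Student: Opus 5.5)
Both parts are direct computations that regroup the sums $W(C,x)$ according to the arcs of the base. Commutativity and associativity of $\oplus$ let us regroup freely. We work with finite graphs, so all sums are finite.

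For part~\ref{enu:fib-equi}, fix $x\sim y$, say $\phi(x)=\phi(y)=v$, and fix a part $C\in\Pi_\sim$, say $C=\phi^{-1}(u)$ for some node $u$ of $H$. This uses surjectivity of $\phi_N$: every part is a nonempty fiber. First I observe that an arc $a\in G(-,x)$ has source in $C$ if and only if $\phi(a)\in H(u,v)$, because $\phi$ commutes with $s$ and $t$. Hence $G(C,x)$ is the disjoint union, over $\bar a\in H(u,v)$, of the sets $\phi^{-1}(\bar a)\cap G(-,x)$. Grouping the sum accordingly and applying the $M$-fibration condition to each $\bar a$ (legitimate, since $x\in\phi^{-1}(t(\bar a))$) gives
\[
W(C,x)=\bigoplus_{\bar a\in H(u,v)}\ \bigoplus_{a\in\phi^{-1}(\bar a)\cap G(-,x)}\lambda(a)=\bigoplus_{\bar a\in H(u,v)}\lambda(\bar a).
\]
The right-hand side depends only on $v$ and $u$, so the same computation gives the same value for $y$, and hence $W(C,x)=W(C,y)$. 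If $H(u,v)$ is empty, both sides equal $0$, consistent with the empty sum.

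For part~\ref{enu:equi-fib}, I first check that $\pi$ is a graph morphism. On nodes it sends $x$ to its part $[x]$. On arcs it sends $a$ to the unique arc $([s(a)],[t(a)])$ of the simple graph $G/\Pi$; this arc exists because $a$ itself witnesses $G([s(a)],[t(a)])\neq\emptyset$. Commutation with source and target holds by construction.

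Surjectivity of $\pi$ on nodes is immediate. On arcs it holds because every arc $(C,D)$ of $G/\Pi$ exists only when some $a\in G(C,D)$ exists, and then $\pi(a)=(C,D)$.

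For the fibration condition, take $\bar a=(C,D)$ and $x\in D$. Since $G/\Pi$ is simple, $\pi^{-1}(\bar a)\cap G(-,x)$ is exactly $G(C,x)$. Its total weight is therefore $W(C,x)$, which by equitability equals $W(C,D)=\lambda(\bar a)$.

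I expect no real obstacle. The only delicate point is the bookkeeping in part~\ref{enu:fib-equi}: showing that the decomposition of $G(C,x)$ into arc-fibers is exact (disjoint and exhaustive), and noting where surjectivity of $\phi$ is used, namely so that every part is a fiber $\phi^{-1}(u)$.
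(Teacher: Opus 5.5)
Your proof is correct and follows the paper's argument essentially verbatim. In part~1 you partition $G(C,x)$ into the arc-fibres $\phi^{-1}(\bar a)\cap G(-,x)$ for $\bar a\in H(u,\phi(x))$ and apply the fibration condition to each, and in part~2 you identify $\pi^{-1}(C,D)\cap G(-,x)$ with $G(C,x)$ and invoke equitability. One small remark: part~1 does not actually use surjectivity of $\phi$, since every part of $\Pi_\sim$ is automatically the fibre $\phi^{-1}(\phi(x_0))$ of any of its own elements $x_0$.
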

\begin{proof}
  \noindent\ref{enu:fib-equi}. Let $x\in N_G$ and let $C=\phi^{-1}(\bar z)$ for some $\bar z \in N_H$. 
  Then, by definition,
  \[
  W(C,x)=\bigoplus_{a\in G(C,x)} \lambda(a). 
  \]
  The image under $\phi$ of each $a$ appearing in the last formula is an arc of $H$ starting from $\bar z$ and ending in $\phi(x)$; conversely, for every $\bar a\in H(\bar z,\phi(x))$, every $a\in\phi^{-1}(\bar a)\cap G(-,x)$ belongs to $G(C,x)$ (its source is mapped to $\bar z$). Hence the sets $\phi^{-1}(\bar a)\cap G(-,x)$, for $\bar a\in H(\bar z,\phi(x))$, partition $G(C,x)$, and we can rewrite the last formula factorizing by the image of $a$ under $\phi$:
  \begin{multline*}
    W(C,x)=\bigoplus_{a\in G(C,x)} \lambda(a)\mleq
    \bigoplus_{\bar a \in H(\bar z,\phi(x))} \bigoplus_{a\in \phi^{-1}(\bar a) \cap G(-,x)} \lambda(a)=\bigoplus_{\bar a \in H(\bar z,\phi(x))} \lambda(\bar a),
  \end{multline*}
  where the last equality follows from the definition of $M$-fibration.
  The last expression does not depend on $x$ but only on $\phi(x)$, so if $x\sim y$ we have $W(C,x)=W(C,y)$, as desired.

  \noindent\ref{enu:equi-fib}. The only non-trivial part is proving that $\pi$ is an $M$-fibration. Let $(C,D)$ be an arc of $G/\Pi$ and let $x\in \pi^{-1}(D)$. Then
  \begin{multline*}
    \bigoplus_{a\in \pi^{-1}(C,D)\cap G(-,x)} \lambda(a)=\bigoplus_{a\in G(C,x)} \lambda(a)\mleq W(C,x)=W(C,D)=\lambda(C,D).
  \end{multline*}
  which is the condition required for $\pi$ to be an $M$-fibration.
\end{proof}

The next lemma is crucial in proving the existence of a minimum base for $M$-fibrations.
\begin{lemma}
  \label{lemma:coarsest-equi}
  Let $G$ be an $M$-graph. Then there exists a coarsest $M$-equitable partition of $G$, i.e., an $M$-equitable partition $\Pi$ such that for every $M$-equitable partition $\Pi'$ of $G$ we have $\sim_{\Pi'}$ is finer than $\sim_{\Pi}$.
\end{lemma}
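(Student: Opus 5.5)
We obtain the coarsest partition as a join. The plan is to show that the join (in the lattice of partitions of $N_G$) of two $M$-equitable partitions is again $M$-equitable. Since $N_G$ is finite, there are finitely many $M$-equitable partitions, and the discrete partition (all singletons) is trivially $M$-equitable, so the set is nonempty. Taking the join of all of them then yields an $M$-equitable partition coarser than every $M$-equitable partition, which is exactly the statement.

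Concretely, let $\Pi_1,\Pi_2$ be $M$-equitable and let $\Pi=\Pi_1\vee\Pi_2$. Here $\sim_\Pi$ is the transitive closure of $\sim_{\Pi_1}\cup\sim_{\Pi_2}$. Every part $C$ of $\Pi$ is a union of parts of $\Pi_1$, and also a union of parts of $\Pi_2$. Since the arc sets $G(C',x)$ for distinct parts $C'$ are disjoint, and $\oplus$ is commutative and associative, for $C=\bigcup_i C'_i$ with the $C'_i\in\Pi_1$ we get
\[
W(C,x)=\bigoplus_i W(C'_i,x).
\]
Suppose $x\sim_{\Pi_1}y$. Then each summand agrees for $x$ and $y$ by $\Pi_1$-equitability, hence $W(C,x)=W(C,y)$. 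The same argument using the decomposition of $C$ into $\Pi_2$-parts handles $x\sim_{\Pi_2}y$. Finally, $x\sim_\Pi y$ means there is a chain $x=z_0,z_1,\dots,z_k=y$ where consecutive elements are related by $\sim_{\Pi_1}$ or $\sim_{\Pi_2}$. Equality of $W(C,\cdot)$ then propagates along the chain, so $\Pi$ is $M$-equitable.

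The only delicate point is the decomposition identity $W(C,x)=\bigoplus_i W(C'_i,x)$. It requires that the parts be disjoint, which they are, and that the monoid be commutative, which is assumed, so that regrouping a finite sum is legitimate. The rest is bookkeeping.

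An alternative that also gives an algorithm is partition refinement starting from the trivial one-part partition. At each step we split parts according to the vector $(W(C,\cdot))_{C}$, and we show by induction that every $M$-equitable partition stays finer than the current one. I would nevertheless present the join argument, since it is shorter.
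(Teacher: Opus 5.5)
Your proof is correct, and it takes a genuinely different route from the paper. The paper uses the refinement you mention only as an alternative. It starts from the one-part partition $\Pi_0$ and obtains $\Pi_i$ by splitting each part of $\Pi_{i-1}$ according to the values $W(C,\cdot)$, $C\in\Pi_{i-1}$. It notes that the sequence stabilizes at some $\Pi_k$, which is then $M$-equitable, and proves by induction on $i$ that every $M$-equitable partition is finer than every $\Pi_i$. Both proofs rest on the same identity $W(C,x)=\bigoplus_j W(C_j,x)$ for a set $C$ that is a disjoint union of parts $C_j$ of an equitable partition. You use it to show that $M$-equitable partitions are closed under joins, so that, with the discrete partition and finiteness, a top element exists. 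The paper uses it as the inductive invariant of the refinement. Your version is shorter and more structural. It also shows exactly what breaks in the approximate setting of Section~\ref{sec:approx-equitable}: your chain step composes equalities, and approximate closeness does not compose. Indeed, the $0.1$-approximate partitions $\{u\,|\,xy\,|\,z\}$ and $\{u\,|\,x\,|\,yz\}$ of Figure~\ref{fig:no-coarsest} have join $\{u\,|\,xyz\}$, whose inequity is $0.2$. The paper's version is constructive: it identifies the coarsest partition as the limit of an explicit refinement. The paper relies on this for the remark that the minimum base can be computed efficiently (in $O(|A_G|\log|N_G|)$ time \`a la Paige--Tarjan), for its worked examples, and as the template for Algorithm~\ref{alg:tolerant}. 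Your join argument, taken alone, only yields the coarsest partition by enumerating all partitions of $N_G$.
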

\begin{proof}
  We build a sequence of partitions $\Pi_0,\Pi_1,\ldots$ of $N_G$ as follows: $\Pi_0$ is the trivial partition with a single part $N_G$, and for $i>0$ we define $\Pi_i$ as the partition of $N_G$ whose parts are the equivalence classes of the relation $\sim_i$ defined by
  \[
    x\sim_i y \Leftrightarrow x\sim_{i-1} y \text{ and } W(C,x)=W(C,y) \quad \forall C\in \Pi_{i-1}.
  \]
  This sequence of finer and finer partitions must eventually stabilize (because the set of nodes is finite), i.e., there exists $k$ such that $\Pi_k=\Pi_{k+1}$ (and hence $\Pi_i=\Pi_k$ for all $i\ge k$, since $\Pi_{i}$ depends only on $\Pi_{i-1}$). Note that $\Pi^*=\Pi_k$ is $M$-equitable: indeed, $\Pi_{k+1}=\Pi_k$ says precisely that $W(C,x)=W(C,y)$ for all $C\in\Pi_k$ whenever $x\sim_k y$. We claim that $\Pi^*$ is the coarsest $M$-equitable partition of $G$.
  Suppose that $\Xi$ is an $M$-equitable partition of $G$. We prove by induction on $i$ that $\Xi$ is finer than $\Pi_i$ for all $i\ge 0$. The base case $i=0$ is trivial. Suppose that $\Xi$ is finer than $\Pi_i$. Let $x,y$ be in the same part of $\Xi$; then they are also in the same part of $\Pi_i$. Let $C$ be a part of $\Pi_i$: since $\Xi$ is finer than $\Pi_i$, there are parts $C_1,\dots,C_m$ of $\Xi$ such that $C=C_1\cup\dots\cup C_m$ (a disjoint union). Since $\Xi$ is an $M$-equitable partition and $x,y$ lie in the same part of $\Xi$, we have $W(C_j,x)=W(C_j,y)$ for every $j$, whence
  \[
    W(C,x)=\bigoplus_{j=1}^m W(C_j,x)=\bigoplus_{j=1}^m W(C_j,y)=W(C,y),
  \]
  so $x,y$ are also in the same part of $\Pi_{i+1}$, and we conclude that $\Xi$ is finer than $\Pi_{i+1}$. By induction, $\Xi$ is finer than $\Pi_i$ for all $i\ge 0$, and in particular it is finer than $\Pi^*=\Pi_k$.
\end{proof}
We are now ready to prove that every $M$-graph has a minimum base.

\begin{definition}[$M$-fibration prime and minimal $M$-fibration]
  An $M$-graph $G$ is \emph{$M$-fibration prime} if every epimorphic $M$-fibration $\phi: G \to H$ is an isomorphism. An $M$-fibration $\phi: G \to H$ is \emph{minimal} if it is epimorphic and $H$ is $M$-fibration prime. 
\end{definition}

Note that every $M$-fibration prime graph is simple, i.e., it has no parallel arcs, because if $G$ has two parallel arcs $a,b$ from $x$ to $y$, then $G$ is non-trivially and epimorphically fibred onto the graph $H$ obtained by merging the two arcs with a new arc labelled by $\lambda(a)\oplus \lambda(b)$ (which may be $0$: this is why labels are allowed to be $0$).

As a consequence of the above results, we have the following theorem, which is the main result of this section (and the equivalent, in the context of $M$-fibrations, of the well-known result that every graph has a minimum base~\cite{BoVGF}):
\begin{theorem}
  \label{thm:min-base}
  For every $M$-graph $G$, we have:
  \begin{enumerate}
    \item\label{enu:min-fib} there exists a minimal $M$-fibration $\phi: G \to B$;
    \item\label{enu:uniq} $B$ is unique up to isomorphism;
    \item\label{enu:phi-uniq} $\phi$ is unique up to composition with an isomorphism of $B$.
  \end{enumerate}
  In the following, we call $B$ the\footnote{We use the definite article, because of its uniqueness up to isomorphism.} \emph{minimum base} of $G$ and $\phi$ the \emph{minimum $M$-fibration} of $G$, and denote $B$ as $\minb G$ and $\phi$ as $\mu_G: G \to \minb G$.
\end{theorem}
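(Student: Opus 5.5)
Take $\Pi^*$ to be the coarsest $M$-equitable partition of $G$ given by Lemma~\ref{lemma:coarsest-equi}. Set $B=G/\Pi^*$ and $\phi=\pi:G\to G/\Pi^*$, the canonical projection. By Proposition~\ref{prop:fib-equi}(\ref{enu:equi-fib}), $\pi$ is an epimorphic $M$-fibration. The plan has three parts: show $B$ is $M$-fibration prime, which gives part~\ref{enu:min-fib}; then show that any minimal $M$-fibration $\psi:G\to H$ factors through $\pi$ via an isomorphism, which gives parts~\ref{enu:uniq} and~\ref{enu:phi-uniq} together.

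For primality, let $\theta:B\to H$ be an epimorphic $M$-fibration. By Proposition~\ref{prop:comp-bij}(\ref{enu:comp}), $\theta\circ\pi$ is an epimorphic $M$-fibration. By Proposition~\ref{prop:fib-equi}(\ref{enu:fib-equi}), its fibres form an $M$-equitable partition of $G$, and this partition is coarser than $\Pi^*$. By maximality of $\Pi^*$ the two partitions coincide, so $\theta_N$ is injective; being surjective, it is a bijection. Since $B$ is simple, $\theta_A$ is injective as well. Indeed, two distinct arcs of $B$ with the same image would have the same source and target, because $\theta_N$ is injective, and hence would be parallel. So $\theta$ is bijective, and Proposition~\ref{prop:comp-bij}(\ref{enu:bij}) says it is an isomorphism of $M$-graphs.

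For uniqueness, let $\psi:G\to H$ be any minimal $M$-fibration. I first note, as the paper does, that $H$ is simple, being prime. The fibres of $\psi$ give an $M$-equitable partition $\Xi$ by Proposition~\ref{prop:fib-equi}(\ref{enu:fib-equi}), and $\Xi$ is finer than $\Pi^*$. So there is a well-defined surjective map $\theta_N:N_H\to N_B$ with $\theta_N\circ\psi_N=\pi_N$. On arcs, for $\bar a\in H(u,v)$ I choose any preimage $a$ and set $\theta_A(\bar a)$ to be the unique arc of $B$ from $\theta_N(u)$ to $\theta_N(v)$. This arc exists because $\pi(a)$ is such an arc, and it is unique because $B$ is simple. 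Then $\theta\circ\psi=\pi$ on arcs too, since both are forced by endpoints in the simple graph $B$, and $\theta$ is epimorphic.

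The main obstacle is checking that $\theta$ is an $M$-fibration. Take $\bar b\in B(C,D)$ and $v\in\theta^{-1}(D)$, and pick $x\in\psi^{-1}(v)$. The arcs of $H$ into $v$ lying over $\bar b$ are exactly the arcs of $H(\theta^{-1}(C),v)$. Summing over them and expanding each label by the fibration property of $\psi$ at $x$ gives $\bigoplus_{a\in G(C,x)}\lambda(a)=W(C,D)=\lambda(\bar b)$, as required. This is the same regrouping as in Proposition~\ref{prop:fib-equi}(\ref{enu:fib-equi}).

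Primality of $H$ then makes $\theta$ an isomorphism, so $B\cong H$ and $\psi=\theta^{-1}\circ\pi$. Hence any two minimal $M$-fibrations differ by composition with an isomorphism of the base, which is part~\ref{enu:phi-uniq}. The only delicate points are the arc components, and these are handled throughout by simplicity of the bases.
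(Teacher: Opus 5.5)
Your proposal is correct and follows the paper's proof essentially step by step. Primality of $G/\Pi^*$ comes from the coarsest equitable partition plus simplicity of the quotient. Uniqueness then rests on the comparison morphism $\theta\colon H\to B$ with $\theta\circ\psi=\pi$, which is an epimorphic $M$-fibration and hence an isomorphism because $H$ is prime.

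The only difference is cosmetic. In the fibration check for $\theta$, you regroup directly over the arcs of $H(\theta^{-1}(C),v)$, whereas the paper sums over nodes $w\in\theta^{-1}(C)$ and uses simplicity of the prime base. As a result, your version never actually uses simplicity of $H$.
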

\begin{proof}
  \noindent\ref{enu:min-fib}. Let $\Pi$ be the coarsest $M$-equitable partition of $G$, which exists by Lemma~\ref{lemma:coarsest-equi}. Let $B=G/\Pi$ be the quotient graph and let $\pi: G \to B$ be the canonical projection onto the quotient graph. By Proposition~\ref{prop:fib-equi} (\ref{enu:equi-fib}), $\pi$ is an epimorphic $M$-fibration. We claim that $B$ is $M$-fibration prime. Suppose that $\psi: B \to H$ is an epimorphic $M$-fibration. Then, by Proposition~\ref{prop:comp-bij} (\ref{enu:comp}), $\psi\circ \pi: G \to H$ is an epimorphic $M$-fibration, and by Proposition~\ref{prop:fib-equi} (\ref{enu:fib-equi}), it induces an $M$-equitable partition of $G$. The parts of this partition are unions of parts of $\Pi$ (the fibres of $\psi\circ\pi$ are unions of fibres of $\pi$), so it is coarser than $\Pi$; but $\Pi$ is the coarsest $M$-equitable partition of $G$, so the two partitions coincide, and $\psi$ is injective (hence, being epimorphic, bijective) on nodes. It is then also a bijection on arcs: it is surjective by hypothesis, and if $\psi(a)=\psi(a')$ then $a$ and $a'$ have the same source and the same target (by injectivity on nodes), so $a=a'$ because $B$ is simple. Thus $\psi$ is a bijective $M$-fibration, hence an isomorphism by Proposition~\ref{prop:comp-bij} (\ref{enu:bij}). So we conclude that $B$ is $M$-fibration prime.

  \noindent\ref{enu:uniq} Suppose $\phi: G \to B_1$ is a minimal $M$-fibration, and let $\Pi_1$ be the $M$-equitable partition of $G$ induced by $\phi$. Since $\Pi$ is the coarsest $M$-equitable partition of $G$, we have that $\Pi_1$ is finer than $\Pi$. Define $\chi: B_1 \to B$ by $\chi(x)=\pi(y)$ for any $y \in \phi^{-1}(x)$: this definition is well-posed because if $y_1,y_2 \in \phi^{-1}(x)$, then $y_1\sim_{\Pi_1} y_2$, and since $\Pi_1$ is finer than $\Pi$, we have $y_1\sim_\Pi y_2$, so $\pi(y_1)=\pi(y_2)$. Both $B_1$ and $B$ are simple ($B_1$ because it is $M$-fibration prime, $B$ by construction), so $\chi$ extends to arcs as follows: if $c$ is the arc of $B_1$ from $w$ to $z$, pick $a\in\phi^{-1}(c)$ (which exists because $\phi$ is epimorphic); then $a$ goes from a node of $\phi^{-1}(w)\subseteq\pi^{-1}(\chi(w))$ to a node of $\phi^{-1}(z)\subseteq\pi^{-1}(\chi(z))$, so $G(\pi^{-1}(\chi(w)),\pi^{-1}(\chi(z)))\neq\emptyset$ and $B$ has an arc from $\chi(w)$ to $\chi(z)$: we let $\chi(c)$ be this arc. By construction $\chi$ is a graph morphism with $\pi=\chi\circ\phi$, and it is epimorphic: on nodes because $\pi$ is, and on arcs because every arc of $B$ is of the form $\pi(a)=\chi(\phi(a))$ for some $a\in A_G$.

  We now show that $\chi$ is an $M$-fibration. Let $\bar c$ be the arc of $B$ from $U$ to $V$ (recall that the nodes of $B$ are the parts of $\Pi$, so $\pi^{-1}(U)=U$ and $\pi^{-1}(V)=V$ as sets of nodes of $G$), and let $z\in\chi^{-1}(V)$; fix $y\in\phi^{-1}(z)$, so that $y\in V$. Since $B_1$ is simple, the arcs of $B_1$ ending in $z$ that are mapped by $\chi$ to $\bar c$ are exactly the arcs $(w,z)$ of $B_1$ with $\chi(w)=U$. For every such arc, the set of arcs of $G$ ending in $y$ and mapped by $\phi$ to $(w,z)$ is $G(\phi^{-1}(w),y)$, so the $M$-fibration condition for $\phi$ reads $\lambda(w,z)=W(\phi^{-1}(w),y)$. On the other hand, if $w\in\chi^{-1}(U)$ but $B_1$ has no arc from $w$ to $z$, then $G(\phi^{-1}(w),y)=\emptyset$ (an arc from $\phi^{-1}(w)$ to $y$ would be mapped by $\phi$ to an arc from $w$ to $z$), so $W(\phi^{-1}(w),y)=0$. Therefore
  \begin{multline*}
    \bigoplus_{c\in\chi^{-1}(\bar c)\cap B_1(-,z)}\lambda(c)
    =\bigoplus_{w\in\chi^{-1}(U)}W(\phi^{-1}(w),y)=\\
    =W\Bigl(\bigcup_{w\in\chi^{-1}(U)}\phi^{-1}(w),\,y\Bigr)
    =W(U,y)\mleq W(U,V)=\lambda(\bar c),
  \end{multline*}
  where we used that the sets $\phi^{-1}(w)$, for $w\in\chi^{-1}(U)$, partition $\pi^{-1}(U)=U$, and that $y\in V$. This is precisely the condition for $\chi$ to be an $M$-fibration.

  Hence $\chi: B_1 \to B$ is an epimorphic $M$-fibration; since $B_1$ is $M$-fibration prime, $\chi$ is an isomorphism, which proves \ref{enu:uniq}.

  \noindent\ref{enu:phi-uniq}. With the notation above, $\pi=\chi\circ\phi$ gives $\phi=\chi^{-1}\circ\pi$: every minimal $M$-fibration of $G$ is obtained from the canonical projection $\pi$ by composition with an isomorphism of its base, and any two minimal $M$-fibrations differ by an isomorphism between their bases.
\end{proof}

In Figure~\ref{fig:minimum-bases} we show the minimum base of one of the graphs in Figure~\ref{fig:example}, when the monoid is $(\RR,+,0)$ (left) or $(\ZZ_{31},+,0)$ (right). Note that the two minimum bases are different, because the two monoids are different.
In both cases, as expected, the minimum base is a simple graph, because every non-simple $M$-graph is epimorphically fibred onto a simple $M$-graph obtained by merging parallel arcs.
\begin{figure}
  \centering
    \resizebox{0.2\textwidth}{!}{
    \begin{tikzpicture}[
      n/.style={circle, draw, fill=white, minimum size=6pt, inner sep=0pt},
      >={Stealth[length=5pt]}
    ]
    \node[n] (top)    at (0,2.2)     {$\{0,1\}$};
    \node[n] (center) at (0,0.6)     {$\{2\}$};
    \node[n] (b)      at (0,-1.3) {$\{3,4\}$};

    \draw[->] (top) -- (center) node[midway,left] {38};
    \draw[->] (center) -- (b) node[midway,left] {30};
    \draw[->] (b) to[bend right=45] node[pos=0.5, above, xshift=4pt] {10} (top);
	\draw[->] (top) to[out=120, in=60, looseness=4, min distance=8mm] node[above] {5} (top);
  \end{tikzpicture} $\qquad$ 
  \begin{tikzpicture}[
      n/.style={circle, draw, fill=white, minimum size=6pt, inner sep=0pt},
      >={Stealth[length=5pt]}
    ]
    \node[n] (top)    at (0,2.2)     {$\{0,1\}$};
    \node[n] (center) at (0,0.6)     {$\{2\}$};
    \node[n] (b)      at (0,-1.3) {$\{3,4\}$};

    \draw[->] (top) -- (center) node[midway,left] {7};
    \draw[->] (center) -- (b) node[midway,left] {30};
    \draw[->] (b) to[bend right=45] node[pos=0.5, above, xshift=4pt] {10} (top);
	\draw[->] (top) to[out=120, in=60, looseness=4, min distance=8mm] node[above] {5} (top);
  \end{tikzpicture} }
  \caption{The minimum base $\hat G$ of the graph $G$ in Figure~\ref{fig:example}, when the monoid is $(\RR,+,0)$ (left) or $(\ZZ_{31},+,0)$ (right). We have named each node of $\hat G$ with the corresponding part of the coarsest equitable partition of $G$. The only difference between the two minimum bases is the label of the arc from $\{0,1\}$ to $\{2\}$, which is $15 \oplus 23$: in $\ZZ$ this gives $15+23=38$, while in $\ZZ_{31}$ it gives $15+23=38 \equiv 7 \pmod{31}$.}
  \label{fig:minimum-bases}
\end{figure}

Conversely, the graph $H$ in Figure~\ref{fig:example}, which differs from $G$ only in two labels, has a different minimum base, shown in Figure~\ref{fig:minimum-bases-bis}: the minimum base is sensitive to every label, and a small change of the labels can change it substantially (this is the starting point of Section~\ref{sec:approx}).

\begin{figure}
  \centering
    \resizebox{0.15\textwidth}{!}{
    \begin{tikzpicture}[
      n/.style={circle, draw, fill=white, minimum size=6pt, inner sep=0pt},
      >={Stealth[length=5pt]}
    ]
    \node[n] (n0)    at (-1.7,2.2)     {$\{0\}$};
    \node[n] (n1)    at (1.7,2.2)     {$\{1\}$};
    \node[n] (center) at (0,0.6)     {$\{2\}$};
    \node[n] (b)      at (0,-1.3) {$\{3,4\}$};

    \draw[->] (n0) to[bend left=25] node[pos=0.5,above] {$5$} (n1);
    \draw[->] (n1) to[bend left=25] node[pos=0.5,above] {$6$} (n0);
    \draw[->] (n0) -- (center) node[midway,right] {$15$};
    \draw[->] (n1) -- (center) node[midway,left] {$23$};
    \draw[->] (center) -- (b) node[midway,left] {$30$};
    \draw[->] (b) to[bend left=35] node[pos=0.5, above, xshift=4pt] {$10$} (n0);
    \draw[->] (b) to[bend right=35] node[pos=0.5, above, xshift=8pt] {$9$} (n1);
  \end{tikzpicture}}
  \caption{The minimum base $\hat H$ of the graph $H$ in Figure~\ref{fig:example}, when the monoid is $(\RR,+,0)$. Again, we have named each node of $\hat H$ with the corresponding part of the coarsest equitable partition of $H$. The reason why $\hat H$ is different from $\hat G$ (see Figure~\ref{fig:minimum-bases}, left) is that $0$ and $1$ cannot be merged (because the two opposite arcs have different labels ($5$ and $6$, instead of $5$ in the two directions), and furthermore the arcs coming from $3$ and $4$ also have different labels ($8+2=10$ and $9$, respectively)). Nodes $3$ and $4$ are merged because the arcs coming from $2$ have the same label $30$.}
  \label{fig:minimum-bases-bis}
\end{figure}

\smallskip
It is worth noting that the proof of Theorem~\ref{thm:min-base} is constructive: it provides a simple algorithm to compute the minimum base of an $M$-graph, by iteratively refining the partition of nodes until it stabilizes. A naive implementation runs in time $O(|A_G|\cdot|N_G|)$; using the ``process the smaller half'' technique of Paige and Tarjan~\cite{PaiTPR}, and assuming that equality in $M$ is decidable in constant time, one obtains $O(|A_G|\log|N_G|)$, as done for $M=(\mathbb{R}_{\ge 0},+,0)$ (Markov-chain lumping) in~\cite{ValFSO}; for standard colour refinement, i.e., $M=(\mathbb{N},+,0)$, this bound is known to be tight~\cite{BeBGTL}. 

As we observed, the minimum base of an $M$-graph is always a simple graph, even if the original graph has parallel arcs. This may seem to contradict the fact that the minimum base of a standard graph fibration may have parallel arcs, but this is not the case: if a standard graph fibration has parallel arcs, then the minimum base is obtained by merging them into a single arc with label equal to the number of merged arcs, which is exactly what happens in the case of $M$-fibrations with $M=(\mathbb{N},+,0)$. 
Another interesting property is that the minimum $M$-fibration is unique (up to isomorphism of the base), while the minimal fibrations of a standard graph may not be unique (even up to isomorphism of the base), because different minimal fibrations may differ on parallel arcs.

\section{Properties of the minimum base of an $M$-graph and pullbacks}

One further property of the minimum base of an $M$-graph is the following:

\begin{theorem}\label{thm:base-invariance}
  For every $M$-graph $G$, if $\phi: G \to H$ is an epimorphic $M$-fibration, then there is an epimorphic $M$-fibration $\psi: H \to \minb G$ such that $\mu_G=\psi\circ\phi$; furthermore $\minb H$ is isomorphic to $\minb G$.
\end{theorem}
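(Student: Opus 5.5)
The plan is to reuse the construction of $\chi$ from the proof of Theorem~\ref{thm:min-base} (\ref{enu:uniq}). That argument never used that $B_1$ was $M$-fibration prime until the final step. It only used that $\phi$ is an epimorphic $M$-fibration and that $B_1$ is simple. So the first task is to redo it for an arbitrary epimorphic $M$-fibration $\phi: G\to H$ where $H$ may have parallel arcs.

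Let $\Pi$ be the coarsest $M$-equitable partition (Lemma~\ref{lemma:coarsest-equi}) and $\mu_G=\pi: G\to G/\Pi=\minb G$; composing with an isomorphism is harmless by Theorem~\ref{thm:min-base} (\ref{enu:phi-uniq}). By Proposition~\ref{prop:fib-equi} (\ref{enu:fib-equi}), the fibres of $\phi$ form an $M$-equitable partition, hence one finer than $\Pi$. Define $\psi$ on nodes by $\psi(z)=\pi(y)$ for any $y\in\phi^{-1}(z)$; this is well defined and surjective. On arcs, for $c\in H(w,z)$ let $\psi(c)$ be the unique arc of $\minb G$ from $\psi(w)$ to $\psi(z)$. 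This arc exists because $c=\phi(a)$ for some $a$, since $\phi$ is epimorphic. Then $\psi$ is a graph morphism with $\pi=\psi\circ\phi$: on nodes by definition, and on arcs by simplicity of $\minb G$. It is epimorphic because $\pi$ is.

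The main step is the $M$-fibration condition for $\psi$. Fix an arc $\bar c\in\minb G(U,V)$, a node $z\in\psi^{-1}(V)$, and $y\in\phi^{-1}(z)\subseteq V$. The arcs of $H$ ending at $z$ mapped to $\bar c$ are exactly $H(\psi^{-1}(U),z)$. Now apply the identity from the proof of Proposition~\ref{prop:fib-equi} (\ref{enu:fib-equi}), namely $W(\phi^{-1}(w),y)=\bigoplus_{c\in H(w,z)}\lambda(c)$, which is valid even with parallel arcs and gives $0$ when $H(w,z)=\emptyset$. Summing over $w\in\psi^{-1}(U)$ and using that these fibres partition $U$ gives
\[
\bigoplus_{c\in\psi^{-1}(\bar c)\cap H(-,z)}\lambda(c)=W(U,y)=W(U,V)=\lambda(\bar c).
\]
I expect this to be the only real obstacle: handling parallel arcs in $H$ correctly, which the cited identity already does.

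For the second claim, apply the first part to $H$ itself with the epimorphic fibration $\mu_H$. Also note that $\mu_H\circ\phi: G\to\minb H$ is an epimorphic $M$-fibration (Proposition~\ref{prop:comp-bij} (\ref{enu:comp})) onto an $M$-fibration prime graph, so it is a minimal $M$-fibration of $G$. By Theorem~\ref{thm:min-base} (\ref{enu:uniq}), $\minb H\cong\minb G$.
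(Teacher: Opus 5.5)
Your proof is correct, but it takes a slightly different route from the paper.

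The paper never constructs $\psi$ on $H$ directly. It first collapses $H$ onto the simple graph $\minb H$. It then applies the construction of $\chi$ from Theorem~\ref{thm:min-base}~(\ref{enu:uniq}) verbatim to the epimorphic $M$-fibration $\mu_H\circ\phi\colon G\to\minb H$, since that construction only needs a simple target. This gives $\chi\colon\minb H\to\minb G$ with $\mu_G=\chi\circ\mu_H\circ\phi$, and the paper sets $\psi=\chi\circ\mu_H$. The isomorphism $\minb H\cong\minb G$ then comes for free, because $\chi$ is an epimorphic $M$-fibration out of the prime graph $\minb H$.

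You instead work on $H$ itself, parallel arcs included. You replace the single label $\lambda(w,z)$ of the original argument by the sum $\bigoplus_{c\in H(w,z)}\lambda(c)=W(\phi^{-1}(w),y)$ from the proof of Proposition~\ref{prop:fib-equi}~(\ref{enu:fib-equi}). This is a sound and slightly more general form of the $\chi$ argument: it shows that only simplicity of $\minb G$ is needed. The price is that you redo the computation rather than cite it. Since $\phi$ is surjective on nodes and arcs, $\psi$ is forced by $\mu_G=\psi\circ\phi$, so both routes produce the same map and differ only in how the fibration condition is checked.

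Your derivation of $\minb H\cong\minb G$ from the minimality of $\mu_H\circ\phi$ and Theorem~\ref{thm:min-base}~(\ref{enu:uniq}) is also correct. The sentence before it, about applying the first part to $H$ with $\mu_H$, does no work: it only yields the identity of $\minb H$. Drop it, or replace it by applying your first part to $G$ with $\mu_H\circ\phi$ and invoking primeness of $\minb H$, which is exactly the paper's argument.
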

\begin{proof}
  The construction of $\chi$ in the proof of Theorem~\ref{thm:min-base}~(\ref{enu:uniq}) never used the fact that $B_1$ is prime, only that it is simple: hence, given an epimorphic $M$-fibration $\phi\colon G\to H$, apply it to $\phi'=\mu_H\circ\phi\colon G\to\minb H$ (an epimorphic $M$-fibration onto a simple graph, by Proposition~\ref{prop:comp-bij}) to get an epimorphic $M$-fibration $\chi\colon\minb H\to\minb G$ with $\mu_G=\chi\circ\mu_H\circ\phi$; then $\psi=\chi\circ\mu_H$ is the required fibration. Moreover $\chi$ is an epimorphic $M$-fibration out of the $M$-fibration prime graph $\minb H$, hence an isomorphism $\minb H\cong\minb G$.
\end{proof}

Now all the properties proved about $M$-fibrations and minimum bases are analogous to (and in fact proper generalizations of) the properties of standard graph fibrations and minimum bases. But the proofs in~\cite{BoVGF} follow a different approach, based on the notion of universal total graphs (which arise from a right adjoint, and depend on the unique lifting of paths) and on the fact that fibrations are preserved by pullbacks. Unfortunately, this route is not available in the case of $M$-fibrations: unique path lifting fails (an arc labelled $2$ may lift to two arcs labelled $1$), and with it the universal property of universal total graphs (Example~\ref{ex:no-universal}); moreover, the category of $M$-graphs and $M$-fibrations does not have pullbacks, in general (Example~\ref{ex:no-pullback}). 

\begin{figure}
\centering
\begin{tikzcd}[column sep=2.4em, row sep=0.3em]
 G\colon & a \arrow[r, shift left=1.1ex, "\alpha_1"] \arrow[r, shift right=1.1ex, "\alpha_2"'] & a'
 & H\colon & b \arrow[r, shift left=1.1ex, "\beta_1"] \arrow[r, shift right=1.1ex, "\beta_2"'] & b'
\end{tikzcd}

\medskip

\begin{tikzcd}[column sep=2.4em, row sep=0.3em]
 B\colon & x \arrow[r, "\gamma"] & x'
 & Q_t\colon & s \arrow[r, bend left=65, "d_{11}"] \arrow[r, bend left=25, "d_{12}"{yshift=-1pt}] \arrow[r, bend right=25, "d_{21}"'{yshift=1pt}] \arrow[r, bend right=65, "d_{22}"'] & z
\end{tikzcd}

\medskip

\begin{tikzcd}[row sep=1.6em, column sep=1.6em]
 & Q_t \arrow[dl, "\psi_G"'] \arrow[dr, "\psi_H"] & \\
 G \arrow[dr, "\phi_G"'] & & H \arrow[dl, "\phi_H"] \\
 & B &
\end{tikzcd}
\caption{Top: the $M$-graphs of Example~\ref{ex:no-pullback}. Labels: $\lambda(\alpha_i)=\lambda(\beta_j)=1$, $\lambda(\gamma)=2$, $\lambda(d_{11})=\lambda(d_{22})=t$, $\lambda(d_{12})=\lambda(d_{21})=1-t$; moreover $\psi_G(d_{ij})=\alpha_i$ and $\psi_H(d_{ij})=\beta_j$. Bottom: the cone $Q_t$ over the cospan $G\to B\leftarrow H$.}
\label{fig:cone}
\end{figure}

\begin{example}\label{ex:no-pullback}
  Let $M=(\mathbb{R}_{\ge0},+,0)$ and consider the $M$-graphs of Figure~\ref{fig:cone}: $B$ has nodes $x,x'$ and one arc $\gamma\colon x\to x'$ labelled $2$; $G$ has nodes $a,a'$ and two arcs $\alpha_1,\alpha_2\colon a\to a'$ labelled $1$; $H$ is isomorphic to $G$, but its nodes are called $b,b'$ and its arcs $\beta_1,\beta_2\colon b\to b'$, again labelled $1$. The morphisms $\phi_G\colon G\to B$ ($a\mapsto x$, $a'\mapsto x'$, $\alpha_i\mapsto\gamma$) and $\phi_H\colon H\to B$ (analogously) are $M$-fibrations. A \emph{cone} over this cospan is an $M$-graph $Q$ with two $M$-fibrations $\psi_G\colon Q\to G$ and $\psi_H\colon Q\to H$ such that $\phi_G\circ\psi_G=\phi_H\circ\psi_H$; a morphism of cones $u\colon Q\to Q'$ is itself an $M$-fibration with $\psi'_G\circ u=\psi_G$ and $\psi'_H\circ u=\psi_H$; a pullback of the cospan is a terminal cone.

  Commutation forces every node of $Q$ to lie \emph{over} $(a,b)$ or over $(a',b')$ (i.e., to be mapped by $\psi_G,\psi_H$ to $a,b$ or to $a',b'$, respectively), and every arc of $Q$ to go from a node over $(a,b)$ to a node over $(a',b')$, being mapped to some $\alpha_i$ by $\psi_G$ and to some $\beta_j$ by $\psi_H$. For a node $z$ of $Q$ over $(a',b')$ let
  \begin{multline*}
    m_{ij}(z)=\bigoplus\{\lambda(d) : d\in Q(-,z),\mlbr \psi_G(d)=\alpha_i,\ \psi_H(d)=\beta_j\},
  \end{multline*}
  the total in-weight of $z$ along arcs lying over $(\alpha_i,\beta_j)$. The $M$-fibration condition for $\psi_G$ at $z$ and $\alpha_i$ reads $m_{i1}(z)+m_{i2}(z)=\lambda(\alpha_i)=1$, and the one for $\psi_H$ at $z$ and $\beta_j$ reads $m_{1j}(z)+m_{2j}(z)=1$; hence, the matrix $m(z)=(m_{ij}(z))$ is 
  \[
    m(z)=\begin{pmatrix} t & 1-t\\ 1-t & t\end{pmatrix}\qquad\text{for some } t=t(z)\in[0,1].
  \]
  Conversely, for every $t\in[0,1]$ the $M$-graph $Q_t$ of Figure~\ref{fig:cone} (nodes $s$ over $(a,b)$ and $z$ over $(a',b')$, arcs $d_{ij}\colon s\to z$ with $\psi_G(d_{ij})=\alpha_i$, $\psi_H(d_{ij})=\beta_j$, and labels $\lambda(d_{11})=\lambda(d_{22})=t$, $\lambda(d_{12})=\lambda(d_{21})=1-t$) is a cone with $t(z)=t$: both fibration conditions hold at $z$, and there is nothing to check at $s$.

  Now let $u\colon Q\to Q'$ be a morphism of cones and $z$ a node of $Q$ over $(a',b')$. Every arc $d\in Q(-,z)$ with $\psi_G(d)=\alpha_i$ and $\psi_H(d)=\beta_j$ is mapped by $u$ to an arc $e\in Q'(-,u(z))$ with $\psi'_G(e)=\alpha_i$ and $\psi'_H(e)=\beta_j$, and the $M$-fibration condition for $u$ says that $\lambda(e)$ is the sum of the labels of the arcs of $Q(-,z)$ mapped to $e$; summing over all such $e$ we obtain $m_{ij}(u(z))=m_{ij}(z)$, that is, \emph{morphisms of cones preserve the matrix $m$}. Hence, if a pullback $P$ existed, the images in $P$ of the nodes $z$ of the cones $Q_t$, $t\in[0,1]$, would be pairwise distinct (they have distinct matrices): as a consequence, $P$ would have uncountably many nodes. No finite (not even countable) pullback exists.

  Note that in the standard case (all labels equal to $1$) unique lifting makes the fibres of each arc singletons, so the matrix $m(z)$ is $1\times 1$ and forced to be $(1)$: this is why the pullback of the underlying graphs is a pullback of standard fibrations (see~\cite[Theorem~6.1]{BoVGF}, which proves even more), and why the argument above needs labels different from $1$; the obstruction is the non-uniqueness of the matrices $m(z)$, not their existence.
\end{example}

\begin{example}\label{ex:no-universal}
  The graphs $B$ and $G$ of Figure~\ref{fig:cone} also show that universal total graphs~\cite[Sect.~3]{BoVGF} lose their universal property; note that here it suffices to take $M=(\mathbb{N},+,0)$. Recall that the universal total graph of a graph at a node $x$ is the in-tree of all paths ending in $x$, fibred onto the graph in the obvious way; its universal property~\cite[Thm.~3.1]{BoVGF} states that for every fibration $\phi\colon H\to B$ and every node $y$ of $H$ over $x$ there is exactly one fibration from the universal total graph of $B$ at $x$ to $H$ sending the root to $y$; in particular, the universal total graphs of $B$ at $x$ and of $H$ at $y$ are isomorphic. For $M$-graphs the same construction (copying the labels) still yields an $M$-fibration, but the universal property fails: $B$ and $G$ are in-trees, and each is its own universal total graph at $x'$ and at $a'$, respectively; yet there is no $M$-fibration $\psi\colon B\to G$ with $\psi(x')=a'$ (the arc $\gamma$ would have to be mapped to $\alpha_1$ or to $\alpha_2$, and the fibration condition at $a'$ for the other arc would read $0=1$), and $B\not\cong G$. What survives is the fact that the two in-trees are \emph{bisimilar} as weighted transition systems, and indeed the whole theory of Section~\ref{sec:min-base} can be seen as a theory of weighted bisimilarity of unfoldings; we shall not pursue this point of view here.
\end{example}

\section{$\epsilon$-approximate $M$-fibrations}\label{sec:approx}

We want to move on to study approximate versions of $M$-fibrations, which are relevant in applications where we tolerate that the fibration condition may hold only approximately. \chg{Threshold-based relaxations of fibration symmetry have appeared before in special settings: structurally, in the quasifibrations of~\cite{BoLMQF}; on node activities, in~\cite{VePRFS}; and, for the in-weights of layered networks, in the parallel work~\cite{VePHM}. The notion introduced below subsumes these weight-based relaxations in the general setting of labels taken from an arbitrary metric monoid.} 
We first need to have a measure of distance between elements of the monoid:

\begin{definition}
  A \emph{metric monoid} $(M,\oplus,0,d)$ is given by a commutative monoid $(M,\oplus,0)$ and a metric $d: M\times M \to \mathbb{R}_{\ge 0}$ on $M$ such that for all $x,y,z\in M$ we have $d(x\oplus z,y\oplus z)=d(x,y)$.
\end{definition}

By the triangular inequality, we have 
\begin{multline*}
  d(x\oplus x',y\oplus y')\leq d(x\oplus x',y\oplus x')+d(y\oplus x',y\oplus y')\mleq
  d(x,y)+d(x',y')
\end{multline*} 
and this extends to finite sums: this property is called \emph{sum-subadditivity}.

Now, we can measure how much a morphism of $M$-graphs fails to be an $M$-fibration:
\begin{definition}[Approximate $M$-fibration]
  Given a metric monoid $(M,\oplus,0,d)$, an $M$-graph $G$, and a morphism of $M$-graphs $\phi: G \to H$, for every arc $\bar a$ of $H$ and every node $x\in \phi^{-1}(t(\bar a))$ we define the \emph{local fibration error} of $\phi$ at $(\bar a,x)$ as
  \[
    \delta_\phi(\bar a,x)=d\Bigl(\bigoplus_{a\in \phi^{-1}(\bar a)\cap G(-,x)} \lambda(a),\, \lambda(\bar a)\Bigr).
  \]
  The \emph{global fibration error} of $\phi$ is then defined as
  \[
    \Delta(\phi)=\max_{x \in N_G} 
      \sum_{\bar a \in t^{-1}(\varphi(x))}\delta_\phi(\bar a,x).
  \]
  A morphism $\phi$ is an \emph{$\epsilon$-approximate $M$-fibration} if $\Delta(\phi)\leq \epsilon$.
\end{definition}

It is immediate to observe that $\Delta(\phi)=0$ if and only if $\phi$ is an $M$-fibration (all local errors are zero). Moreover, global fibration error is subadditive with respect to composition of morphisms: 

\begin{proposition}
  If $\phi: G \to H$ and $\psi: H \to K$ are morphisms of $M$-graphs, then $\Delta(\psi\circ\phi)\leq \Delta(\phi)+\Delta(\psi)$.
\end{proposition}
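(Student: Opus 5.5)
Fix a node $x\in N_G$ and write $y=\phi(x)$, $z=\psi(y)$. It suffices to show
\[
  \sum_{\bar a\in t^{-1}(z)}\delta_{\psi\circ\phi}(\bar a,x)\le \sum_{b\in t^{-1}(y)}\delta_\phi(b,x)+\sum_{\bar a\in t^{-1}(z)}\delta_\psi(\bar a,y).
\]
The first sum on the right is at most $\Delta(\phi)$ and the second at most $\Delta(\psi)$, since $y\in N_H$. Taking the maximum over $x$ then gives the claim.

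Fix an arc $\bar a\in t^{-1}(z)$. As in the proof of Proposition~\ref{prop:comp-bij}(\ref{enu:comp}), the set $(\psi\circ\phi)^{-1}(\bar a)\cap G(-,x)$ is the disjoint union, over $b\in\psi^{-1}(\bar a)\cap H(-,y)$, of the sets $\phi^{-1}(b)\cap G(-,x)$. Write $S_b=\bigoplus_{a\in\phi^{-1}(b)\cap G(-,x)}\lambda(a)$; then the composite sum equals $\bigoplus_b S_b$. Inserting the intermediate point $\bigoplus_b\lambda(b)$ and using the triangle inequality gives
\[
  \delta_{\psi\circ\phi}(\bar a,x)\le d\Bigl(\bigoplus_b S_b,\bigoplus_b\lambda(b)\Bigr)+d\Bigl(\bigoplus_b\lambda(b),\lambda(\bar a)\Bigr).
\]
The second term is exactly $\delta_\psi(\bar a,y)$. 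By sum-subadditivity, the first term is at most $\sum_{b\in\psi^{-1}(\bar a)\cap H(-,y)}\delta_\phi(b,x)$.

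Now sum over $\bar a\in t^{-1}(z)$. The sets $\psi^{-1}(\bar a)\cap H(-,y)$ are pairwise disjoint subsets of $t^{-1}(y)$, and they in fact cover it, because every $b$ ending in $y$ is mapped by $\psi$ to an arc ending in $z$. Hence the double sum of the $\delta_\phi(b,x)$ terms is at most (indeed equal to) $\sum_{b\in t^{-1}(y)}\delta_\phi(b,x)$, which yields the displayed inequality.

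The one delicate point is the edge case in which the index set for $\bar a$ is empty. Then the composite sum and $\bigoplus_b\lambda(b)$ are both $0$, and the bound still holds with an empty first sum, so nothing special is needed. The remaining work is only bookkeeping: the fibre decomposition and the use of sum-subadditivity over finite, possibly empty, index sets.
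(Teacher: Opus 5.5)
Your proof is correct and complete. The paper states this proposition without proof, and your argument is the one it implicitly relies on: the fibre decomposition from the proof of Proposition~\ref{prop:comp-bij}(\ref{enu:comp}), the triangle inequality through $\bigoplus_b\lambda(b)$, sum-subadditivity, and the observation that the sets $\psi^{-1}(\bar a)\cap H(-,y)$ partition $t^{-1}(y)$. You in fact prove the slightly sharper pointwise bound that the error of $\psi\circ\phi$ at $x$ is at most the error of $\phi$ at $x$ plus the error of $\psi$ at $\phi(x)$.
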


We can also define how far is a pair of graphs from being linked by an epimorphic $M$-fibration:

\begin{definition}[Fibration distance]
  We define the \emph{fibration distance} between $G$ and $H$ as
  \[
    \fd(G,H)=\inf\{\Delta(\phi) : \phi: G \to H \text{ is  epimorphic}\}.
  \]
  Moreover, we define the function $\beta_G: \RR_{\ge 0} \to \NN$ as
  \[
    \beta_G(\epsilon)=\min\{|N_H| : \fd(G,H)\leq \epsilon\}.
  \]
\end{definition}
The $\fd$ function is a quasi-metric in the sense of Lawvere: $\fd(G,G)=0$ and  $\fd(G,B)=0$ if and only if $G$ can be epimorphically fibred over $B$. Furthermore, $\fd(G,B)\leq \fd(G,H)+\fd(H,B)$. But $\fd$ is \emph{not} symmetric, hence it is not a metric.

The function $\beta_G$, in mundane words, says how much you can compress $G$ with an approximate fibration if you allow for an error of at most $\epsilon$. It is non-increasing (if you allow for more error, you can compress more), and $\beta_G(0)=|N_{\hat G}|$ (the number of nodes of the minimum base of $G$). Moreover, for sufficiently large $\epsilon$, $\beta_G(\epsilon)=1$ (the trivial graph with one node and one loop labelled with 0 is an $\epsilon$-approximate fibration of any graph, for sufficiently large $\epsilon$: you just take $\epsilon$ to be the maximum sum of incoming labels over all the nodes of $G$).

The function $\beta_G$ expresses the trade-off between compression and error, and it is a natural object of study in applications. 

\section{$\epsilon$-approximate $M$-equitable partitions}\label{sec:approx-equitable}

Like in the exact case, we can draw a parallel between $\epsilon$-approximate $M$-fibrations and $\epsilon$-approximate $M$-equitable partitions. To define them, let us first observe that in a metric monoid $(M,\oplus,0,d)$ we can define a distance between vectors of elements of $M$ of the same length: for ${\mathbf x}=(x_1,\dots,x_k),{\mathbf y}=(y_1,\dots,y_k)\in M^k$, we define $d({\mathbf x},{\mathbf y})=\sum_{i=1}^k d(x_i,y_i)$. This is a metric on $M^k$ and it is sum-subadditive with respect to the componentwise sum of vectors. We use the same notation $d$ for this metric on vectors, and we write ${\mathbf 0}$ for the vector of length $k$ whose components are all $0$.

\begin{definition}[Approximate $M$-equitable partition]
  Given a metric monoid $(M,\oplus,0,d)$, an $M$-graph $G$, and a partition $\Pi$ of $N_G$ with $p$ parts, define the vector of \emph{local in-weights} of a node $x$ as
  \[
    {\mathbf w}_x=\bigl(W(C,x)\bigr)_{C\in \Pi} \in M^p.
  \]
  A vector ${\mathbf c} \in M^p$ is a \emph{center of radius $r$ for the part $D$} iff $d({\mathbf w}_x,{\mathbf c})\leq r$ for all $x\in D$. The \emph{inequity} of $\Pi$, $\upsilon(\Pi)$, is the minimum value $r$ such that there exists a center of radius $r$ for every part of $\Pi$.
  We say that $\Pi$ is an \emph{$\epsilon$-approximate $M$-equitable partition} if $\upsilon(\Pi)\leq \epsilon$.
\end{definition}

Note that $\Pi$ is a $0$-approximate $M$-equitable partition if and only if there are centers of radius $0$ for all parts. The existence of a center of radius $0$ for a part $D$ is equivalent to the existence of a vector ${\mathbf c}$ such that ${\mathbf w}_x={\mathbf c}$ for all $x\in D$, which precisely means that ${\mathbf w}_x={\mathbf w}_y$ for all $x,y\in D$, i.e., that $\Pi$ is an exact $M$-equitable partition. Hence, the notion of $\epsilon$-approximate $M$-equitable partition is indeed a generalization of the notion of $M$-equitable partition.

There is an obvious parallel between $\epsilon$-approximate $M$-fibrations and $\epsilon$-approximate $M$-equitable partitions (the analogue of Proposition~\ref{prop:fib-equi}); we state it without proof:
\begin{proposition}
  \label{prop:afib-equi}
  \begin{enumerate}
    \item \label{enu:afib-equi} Let $\phi: G \to H$ be an epimorphic $\epsilon$-approximate $M$-fibration: then the partition $\Pi_\sim$ associated with the equivalence relation $\sim$ defined by $x\sim y$ if and only if $\phi(x)=\phi(y)$ is an $\epsilon$-approximate $M$-equitable partition of $G$.
    \item \label{enu:aequi-fib} Let $\Pi$ be an $\epsilon$-approximate $M$-equitable partition of $G$, and let $G/\Pi$ be the simple $M$-graph obtained as the quotient of $G$ by $\Pi$ (its nodes are the parts of $\Pi$, and there is an arc $(C,D)$, labelled by $W(C,D)$, if and only if $G(C,D)\neq\emptyset$); let $\pi: G \to G/\Pi$ be the canonical projection to the quotient graph then $\pi$ is an epimorphic $\epsilon$-approximate $M$-fibration.
  \end{enumerate}
\end{proposition}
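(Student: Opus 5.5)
The plan is to prove both parts by the same bookkeeping as in Proposition~\ref{prop:fib-equi}, replacing equalities with distance bounds and using sum-subadditivity of $d$ (on $M$ and on $M^p$) in place of exact rewriting of sums.

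For part~(\ref{enu:afib-equi}), I take the parts of $\Pi_\sim$ to be the fibres $\phi^{-1}(\bar z)$, $\bar z\in N_H$; they are nonempty since $\phi$ is epimorphic. Fix a part $D=\phi^{-1}(\bar y)$. I define its candidate center $\mathbf c^D$ componentwise: for $C=\phi^{-1}(\bar z)$, set $c^D_C=\bigoplus_{\bar a\in H(\bar z,\bar y)}\lambda(\bar a)$, which is $0$ if there is no such arc. For $x\in D$, the decomposition argument from the proof of Proposition~\ref{prop:fib-equi}~(\ref{enu:fib-equi}) gives
\[
W(C,x)=\bigoplus_{\bar a\in H(\bar z,\bar y)}\ \bigoplus_{a\in\phi^{-1}(\bar a)\cap G(-,x)}\lambda(a).
\]
That argument only uses that $\phi$ is a morphism, not that it is a fibration. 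Sum-subadditivity then yields $d(W(C,x),c^D_C)\le\sum_{\bar a\in H(\bar z,\bar y)}\delta_\phi(\bar a,x)$. Summing over $C\in\Pi_\sim$, the sets $H(\bar z,\bar y)$ partition $t^{-1}(\bar y)=t^{-1}(\phi(x))$ as $\bar z$ ranges over $N_H$. Hence $d(\mathbf w_x,\mathbf c^D)\le\sum_{\bar a\in t^{-1}(\phi(x))}\delta_\phi(\bar a,x)\le\Delta(\phi)\le\epsilon$. So $\mathbf c^D$ is a center of radius $\epsilon$ for $D$, and $\upsilon(\Pi_\sim)\le\epsilon$.

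For part~(\ref{enu:aequi-fib}), the main obstacle is definitional. When $\Pi$ is only approximately equitable, $W(C,x)$ is not constant on $D$, so ``$W(C,D)$'' has no meaning as a common value. I will read the label of the quotient arc $(C,D)$ as $c^D_C$, the $C$-component of a chosen center $\mathbf c^D$ of radius $\upsilon(\Pi)\le\epsilon$ for $D$. This reading coincides with $W(C,D)$ in the exact case. The inequity is defined as a minimum, so I assume such centers exist. If only the infimum were attained, the argument would give $\Delta(\pi)\le\epsilon+\eta$ for every $\eta>0$; one would then need to argue attainment separately (for instance by compactness in concrete monoids such as $\RR$), or else weaken the statement.

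With this labelling the verification is direct, since $G/\Pi$ is simple. Take $x\in D$. The arcs ending in $\pi(x)=D$ are exactly the arcs $(C,D)$ with $G(C,D)\neq\emptyset$. For such an arc, $\pi^{-1}(C,D)\cap G(-,x)=G(C,x)$, so $\delta_\pi((C,D),x)=d(W(C,x),c^D_C)$. Summing over these arcs, and dropping the nonnegative terms for parts $C$ with no arc to $D$, gives at most $d(\mathbf w_x,\mathbf c^D)\le\epsilon$. Taking the maximum over $x$ gives $\Delta(\pi)\le\epsilon$. Finally, $\pi$ is an epimorphic graph morphism exactly as in the exact case.
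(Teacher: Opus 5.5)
The paper gives no proof of this proposition; it only presents it as the evident analogue of Proposition~\ref{prop:fib-equi}. Your argument is exactly that analogue, with the same regrouping of $W(C,x)$ by arcs of $H$ and sum-subadditivity in place of equality, and it is correct in both parts. Your reading of the quotient label as the $C$-component of a radius-$\epsilon$ center of $D$ is in fact forced, since labelling by a representative's $W(C,x_0)$ only guarantees $\Delta(\pi)\le 2\epsilon$, and this bound is sharp. That reading is also the convention the paper itself adopts for approximate bases in Section~\ref{sec:approx-equitable}, and your caveat about whether the minimum defining $\upsilon$ is attained is legitimate.
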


It is useful to observe that computing $\beta_G(\epsilon)$ is equivalent to computing an $\epsilon$-approximate $M$-equitable partition of $G$ with the minimum number of classes. Observe that for $\epsilon>0$, there is \emph{no coarsest $\epsilon$-approximate $M$-equitable partition}, as the following counterexample shows.

\begin{figure}
    \centering
    \begin{tikzpicture}[
      n/.style={circle, draw, fill=white, minimum size=6pt, inner sep=3pt},
      >={Stealth[length=5pt]}
    ]
    \node[n] (u)    at (0,2.2)     {$u$};
    \node[n] (x)    at (-1.7,0.6)     {$x$};
    \node[n] (y) 	at (0,0.6)     {$y$};
    \node[n] (z)    at (1.7,0.6) {$z$};

	\draw[->] (u) to[out=120, in=60, looseness=4, min distance=8mm] node[above] {10} (u);
    \draw[->] (u) -- (x) node[midway,left] {1};
    \draw[->] (u) -- (y) node[midway,left] {1.2};
    \draw[->] (u) -- (z) node[midway,right] {1.4};
  \end{tikzpicture}
    \caption{An example showing that there is no coarsest $\epsilon$-approximate $M$-equitable partition.}
    \label{fig:no-coarsest}
  \end{figure}

\begin{example}
  Consider the graph $G$ in Figure~\ref{fig:no-coarsest}, with the monoid $(\RR,+,0)$ and the Euclidean distance. Clearly $\beta_G(0)=4$ because $G$ itself is fibration prime (all input weights are distinct). Let us find the inequity of all non-trivial partitions leaving $u$ alone: $\upsilon(\{u | xy | z\})=0.1$, $\upsilon(\{u | x | yz\})=0.1$, $\upsilon(\{u | xz | y\})=0.2$, $\upsilon(\{u | xyz\})=0.2$; mixing $u$ with other nodes produces an inequity of $4.5$. Hence, for $0.1\leq\epsilon<0.2$, there are two $\epsilon$-approximate $M$-equitable partitions of $G$ producing non-isomorphic quotient $M$-graphs (the labels on the arcs will be different, and node $y$ will be mapped to different nodes of the quotient graphs).
  The actual values of $\beta_G(\epsilon)$ are: 
  \[
    \beta_G(\epsilon)=\begin{cases}
      4 & \text{if } \epsilon \in [0,0.1)\\
      3 & \text{if } \epsilon \in [0.1,0.2)\\
      2 & \text{if } \epsilon \in [0.2,4.5)\\
      1 & \text{if } \epsilon\geq 4.5.
    \end{cases}
  \] 
\end{example}

Finding an $\epsilon$-approximate $M$-equitable partition of $G$ with the minimum number of classes is a difficult problem:
it is an optimization problem, which can be formulated as a MILP, but we conjecture it to be NP-hard. We leave the study of this conjecture to future work.

We can overcome this difficulty by considering a greedy algorithm that iteratively refines the partition of nodes until it stabilizes, as in the exact case. The difference is that now we have to check whether two nodes are $\epsilon$-approximately equivalent. This is what Algorithm~\ref{alg:tolerant} does: it computes an $\epsilon$-approximate $M$-equitable partition of $G$, and returns the centers of the parts. Centers can be computed in different ways, depending on the monoid we are working with.
One approach that works regardless of the monoid is to take \emph{any} vector as center, or to take the Chebyshev center (the vector that minimizes the maximum distance from the vectors of the part); the latter solution makes the computation quadratic. 

The algorithm is greedy, and it may not return a partition with the minimum number of classes, but it is guaranteed to return a partition with radius at most $\epsilon$ for each class. The algorithm is a generalization of the one in~\cite{ValFSO}, which is the case $M=(\mathbb{R}_{\ge 0},+,0)$ and $\mathrm{ctr}$ is the mean.  

The nodes and arcs of the $\epsilon$-approximate minimum base are obtained by the $\epsilon$-approximate equitable partition coming from Algorithm~\ref{alg:tolerant} (parts are the nodes, an arc is present from $C$ to $D$ if and only if $W(C,D) \neq 0$), like the exact case. For the labels, though, we use the centers of the parts: the label of the arc $(C,D)$ is the $C$-component of the center of $D$. 

The $\epsilon$-approximate minimum base is not unique, because the centers of the parts are not unique. Once the partition has been computed, though, the number of nodes of the resulting base does not depend on the choice of the centers; the partition itself, however, may depend on the center rule adopted (see Example~\ref{ex:algo-H}). 

\begin{algorithm}[t]
\caption{Certified $\epsilon$-equitable partition (tolerant refinement, \texttt{epsilon\_partition})}
\label{alg:tolerant}
  \begin{algorithmic}[1]
  \Require $M$-graph $G$ over a metric monoid $(M,\oplus,0,d)$; $\epsilon\ge0$; a center rule $\mathrm{ctr}$ mapping a finite family of vectors of $M^p$ to a vector of $M^p$ (e.g., the coordinatewise mean, the $\ell^1$ Chebyshev center, or anyone of the vectors)
  \State $\Pi\gets\{N_G\}$ \Comment{trivial partition}
  \Repeat 
    \State for every node $x$: ${\mathbf w}_x\gets\bigl(W(C,x)\bigr)_{C\in\Pi}$
    \State $\Pi'\gets\emptyset$
    \For{each class $D\in\Pi$}
      \State $R\gets D$ \label{ln:tol-begin}
      \While{$R\neq\emptyset$}
        \State $s\gets$ a seed in $R$ (the first time: any node; afterwards: the one farthest from the previous seed)
        \State $A\gets\{x\in R: d({\mathbf w}_x,{\mathbf w}_s)\leq 2\epsilon\}$ \Comment{candidates}
        \State ${\mathbf c}\gets\mathrm{ctr}(\{{\mathbf w}_x : x\in A\})$
        \While{$\max_{x\in A}d({\mathbf w}_x,{\mathbf c})>\epsilon$}
          \State remove from $A$ the node $x\neq s$ farthest from ${\mathbf c}$
          \State ${\mathbf c}\gets\mathrm{ctr}(\{{\mathbf w}_x : x\in A\})$
        \EndWhile
        \State $A\gets A\cup\{x\in R: d({\mathbf w}_x,{\mathbf c})\leq \epsilon\}$ \Comment{absorb}
        \State add the group $A$, with center ${\mathbf c}$, to $\Pi'$
        \State $R\gets R\setminus A$ \label{ln:tol-end}
      \EndWhile
    \EndFor
    \If{$|\Pi'|=|\Pi|$} {\bf break} \EndIf
    \State $\Pi\gets\Pi'$
  \Until{false}
  \State \Return{$\Pi$} with its centers \Comment{every class has radius $\le\epsilon$ w.r.t.\ its own aggregated vectors}
\end{algorithmic}
\end{algorithm}

\begin{example}\label{ex:algo-H}
  Let us try to use Algorithm~\ref{alg:tolerant} to compute an $\epsilon$-approximate $M$-equitable partition of the $M$-graph $H$ of Figure~\ref{fig:example}, with $M=(\RR,+,0)$ and $\epsilon=1$. We use the coordinatewise mean as center rule.
  
  We start with the trivial partition $\Pi=\{\{0,1,2,3,4\}\}$ and we compute the local in-weights of the nodes: ${\mathbf w}_0=(16)$, ${\mathbf w}_1=(14)$, ${\mathbf w}_2=(38)$, ${\mathbf w}_3=(30)$, ${\mathbf w}_4=(30)$. The first seed is $0$, and the candidate set is $\{0,1\}$; the center is computed as ${\mathbf c}=(15)$ (the average between $(14)$ and $(16)$). No node is added to the candidate set. The other class that is created in the split is $\{3,4\}$ (they literally share the same vector). So we have a new partition $\Pi'=\{\{0,1\},\{2\},\{3,4\}\}$, which is different from the previous one, and we continue. The new vectors are ${\mathbf w}_0=(6,0,10)$, ${\mathbf w}_1=(5,0,9)$, ${\mathbf w}_2=(38,0,0)$, ${\mathbf w}_3=(0,30,0)$, ${\mathbf w}_4=(0,30,0)$. The first seed is $0$, and the candidate set is $\{0,1\}$; the center computed this time is ${\mathbf c}=(5.5,0,9.5)$ (the average between $(6,0,10)$ and $(5,0,9)$). Proceeding this way, we will in fact not split any further, so the algorithm terminates with the $\epsilon$-approximate $M$-equitable partition $\Pi=\{\{0,1\},\{2\},\{3,4\}\}$, with centers ${\mathbf c}_{01}=(5.5,0,9.5)$, ${\mathbf c}_2=(38,0,0)$, and ${\mathbf c}_{34}=(0,30,0)$. The resulting base is shown in Figure~\ref{fig:minimum-approx-base}: the nodes are the parts of the partition, and the arc from part $C$ to part $D$ (present iff $H$ has an arc from $C$ to $D$) is labelled by the $C$-th component of the center of $D$.

  An exhaustive search shows that the algorithm (when using coordinatewise mean) produces a base with 4 nodes when $\epsilon \in [0,1)$, 3 nodes when $\epsilon \in [1,13.5)$, and 1 node when $\epsilon \geq 13.5$. The actual function $\beta_H(\epsilon)$ is 4 in $[0,1)$, 3 in $[1,12)$ and 1 when $\epsilon \geq 12$. This comparison shows that the algorithm is not optimal in the interval $[12,13.5)$, but it is optimal in the other two intervals; using the exact $\ell^1$ Chebyshev center as center rule, instead, the algorithm attains $\beta_H(\epsilon)$ for \emph{every} $\epsilon$. Note that $\beta_H$ never takes the value $2$: the best partition with two parts, $\{0\},\{1,2,3,4\}$, has inequity $13$, \emph{larger} than the inequity $12$ of the trivial partition --- the smallest feasible $\epsilon$ is not monotone in the number of parts.
\end{example}

\begin{figure}
  \centering
    \resizebox{0.1\textwidth}{!}{
    \begin{tikzpicture}[
      n/.style={circle, draw, fill=white, minimum size=6pt, inner sep=0pt},
      >={Stealth[length=5pt]}
    ]
    \node[n] (top)    at (0,2.2)     {$\{0,1\}$};
    \node[n] (center) at (0,0.6)     {$\{2\}$};
    \node[n] (b)      at (0,-1.3) {$\{3,4\}$};

    \draw[->] (top) -- (center) node[midway,left] {38};
    \draw[->] (center) -- (b) node[midway,left] {30};
    \draw[->] (b) to[bend right=45] node[pos=0.5, above, xshift=8pt] {9.5} (top);
	\draw[->] (top) to[out=120, in=60, looseness=4, min distance=8mm] node[above] {5.5} (top);
  \end{tikzpicture} }
  \caption{The $\epsilon$-approximate base of the graph $H$ in Figure~\ref{fig:example} produced by Algorithm~\ref{alg:tolerant}, when the monoid is $(\RR,+,0)$, $\epsilon=1$ and we are using the coordinatewise mean as center rule (here it is in fact a smallest $\epsilon$-approximate base: no $\epsilon$-equitable partition with two parts exists). Observe that if we were using the monoid $(\ZZ,+,0)$ instead, we could not apply the coordinatewise mean as center rule (division is not defined in $\ZZ$); using the $\ell^1$ Chebyshev center with integer coordinates, instead, we would get ${\mathbf c}_{01}=(5,0,10)$ or $(6,0,9)$ (both at distance $1$ from ${\mathbf w}_0$ and ${\mathbf w}_1$), whereas ${\mathbf c}_2=(38,0,0)$ and ${\mathbf c}_{34}=(0,30,0)$ would be unchanged.}
  \label{fig:minimum-approx-base}
\end{figure}

\section{Application: compression of neural networks}\label{sec:application}

Fibrations embed a notion of symmetry in a graph: two nodes that can be fibred together, receive the same input from the same (or equivalent) sources. This notion of symmetry can be fruitfully applied to compress neural networks: two neurons that are equivalent in this sense can be merged into one, and the weights of the incoming arcs can be summed. This idea was introduced in~\cite{VePRFS}: here we show the mathematical foundations of this approach. \chg{While the present paper was being finalized, the approach of~\cite{VePRFS} was further developed in~\cite{VePHM}, which introduces $\epsilon$-fibers of weighted layered networks together with a mean-weight compression rule, and studies the emergence of these symmetries during training; the framework presented here provides a general algebraic and metric foundation for that line of work.} The fact that we are using general monoids allows us to treat different types of neural networks, including convolutional neural networks.

Let us briefly describe how a neural network can be represented as an $M$-graph, using the well-known MNIST classification dataset as reference~\cite{LeBBH98}.

In a simple multilayer perceptron (MLP), the translation is quite simple: the nodes of the graph are the neurons, and the arcs are the synaptic connections between them. The weights on the arcs are the synaptic weights, so the natural monoid is $(\RR,+,0)$. In fact, it is convenient to add explicitly a bias node and to introduce a normalization factor that is different at each layer, so that the distances between the in-weights of the neurons are comparable across layers. Details are given in the caption of Figure~\ref{fig:mlpgraph}.

\begin{figure}
  \centering  
  \resizebox{\netgraphwidth}{!}{
\begin{tikzpicture}[>=Stealth, font=\small]
\node[font=\bfseries] at (7.5,3.4) {(a) The MLP network: four fully-connected layers, every unit carries one number};
\foreach \x/\n/\col/\lab in {1.5/784/gray/{input\\ $784$ pixels}, 5.5/300/blue/{hidden 1\\ $300$ units}, 9.5/100/red/{hidden 2\\ $100$ units}, 13.5/10/violet/{output\\ $10$ units}} {
  \draw[fill=\col!25,draw=\col!70!black,thick,rounded corners=2pt] (\x-0.35,0.2) rectangle ++(0.7,2.4);
  \foreach \yy in {0.5,0.9,1.3,1.7,2.1,2.5} \fill[\col!70!black] (\x,\yy) circle (1.4pt);
  \node[align=center,font=\footnotesize] at (\x,-0.35) {\lab};
}
\draw[->,thick] (2.0,1.4) -- node[above,font=\footnotesize,align=center] {fully connected\\ $784{\times}300$ weights} (5.0,1.4);
\draw[->,thick] (6.0,1.4) -- node[above,font=\footnotesize,align=center] {fully connected\\ $300{\times}100$ weights} (9.0,1.4);
\draw[->,thick] (10.0,1.4) -- node[above,font=\footnotesize,align=center] {fully connected\\ $100{\times}10$ weights} (13.0,1.4);
\node[font=\footnotesize,align=center,text=blue!70!black] at (5.5,-1.35) {unit $j$ computes\\ $\mathrm{ReLU}\bigl(b_j+\sum_i W_{ji}\,x_i\bigr)$};

\begin{scope}[yshift=-6.0cm]
\node[font=\bfseries] at (7.5,2.4) {(b) The $M$-graph: one node per unit plus a bias node, one arc per nonzero weight.};
\tikzset{nd/.style={circle,draw,thick,minimum size=5mm,inner sep=0pt,font=\scriptsize}}
\foreach \i/\y in {1/1.2,2/0.4,3/-0.4} \node[nd,fill=gray!30] (i\i) at (1.5,\y) {$x_{\i}$};
\node at (1.5,-1.05) {$\vdots$}; \node[nd,fill=gray!30] (in) at (1.5,-1.8) {$x_{784}$};
\foreach \i/\y in {1/1.2,2/0.4,3/-0.4} \node[nd,fill=blue!25] (h\i) at (5.5,\y) {$h_{\i}$};
\node at (5.5,-1.05) {$\vdots$}; \node[nd,fill=blue!25] (hn) at (5.5,-1.8) {$h_{300}$};
\foreach \i/\y in {1/1.2,2/0.4,3/-0.4} \node[nd,fill=red!25] (k\i) at (9.5,\y) {$k_{\i}$};
\node at (9.5,-1.05) {$\vdots$}; \node[nd,fill=red!25] (kn) at (9.5,-1.8) {$k_{100}$};
\foreach \i/\y in {1/0.9,2/0.1} \node[nd,fill=violet!25] (o\i) at (13.5,\y) {$o_{\i}$};
\node at (13.5,-0.65) {$\vdots$}; \node[nd,fill=violet!25] (on) at (13.5,-1.4) {$o_{10}$};
\node[nd,fill=yellow!40,minimum size=6mm] (bias) at (7.5,-3.4) {bias};
\foreach \s in {i1,i2,i3,in} \foreach \t in {h1,h2,h3,hn} \draw[->,blue!60!black,thin,opacity=0.5] (\s) -- (\t);
\foreach \s in {h1,h2,h3,hn} \foreach \t in {k1,k2,k3,kn} \draw[->,red!60!black,thin,opacity=0.5] (\s) -- (\t);
\foreach \s in {k1,k2,k3,kn} \foreach \t in {o1,o2,on} \draw[->,violet!70!black,thin,opacity=0.5] (\s) -- (\t);
\foreach \t in {hn,kn,on} \draw[->,yellow!60!black,dashed] (bias) -- (\t);
\node[align=center,font=\footnotesize,blue!70!black] at (3.5,1.9) {$784{\times}300$ arcs\\ label $=W_{ji}\in\mathbb{R}$};
\node[align=center,font=\footnotesize,red!60!black] at (7.5,1.9) {$300{\times}100$ arcs\\ label $\in\mathbb{R}$};
\node[align=center,font=\footnotesize,violet!70!black] at (11.5,1.9) {$100{\times}10$ arcs\\ label $\in\mathbb{R}$};
\node[align=left,font=\footnotesize,yellow!50!black,anchor=west] at (8.1,-3.4) {bias arcs to every hidden/output unit, label $=b\in\mathbb{R}$};
\begin{scope}[on background layer]
 \node[rounded corners,draw=gray!60,dashed,fit=(i1)(in),inner sep=4pt,label={[gray,font=\scriptsize]below:frozen singletons}] {};
 \node[rounded corners,draw=blue!50,dashed,fit=(h1)(hn),inner sep=4pt,label={[blue!60,font=\scriptsize]below:initial class ``layer 1''}] {};
 \node[rounded corners,draw=red!50,dashed,fit=(k1)(kn),inner sep=4pt,label={[red!60,font=\scriptsize]below:``layer 2''}] {};
\end{scope}
\node[font=\scriptsize,gray] at (13.5,-2.05) {frozen singletons};
\end{scope}
\end{tikzpicture}}
  \caption{LeNet-300-100~\cite{LeBBH98}, an MLP neural network for the MNIST classification problem. Above, the usual representation: here, layers are fully connected. When represented as an $M$-graph, weights live on the arcs (we added a bias node, to spread biases to the layers), and input and output layers are frozen (their units cannot be merged during the process). Instead of working on $(\RR,+,0)$, it is more convenient to use $(\RR\oplus\RR\oplus\RR,+,0,d_1\oplus d_2 \oplus d_3)$ so that distances are measured differently on each layer: explicitly, we can define $d_i(u,v)=|u-v|/s_i$ where $s_i$ is the median distance between the total in-weights of the nodes of layer $i$: this way, we can normalize distances across layers when the actual weights vary significantly. Merging a class is the quotient construction: in-weights $\leftarrow$ center, out-weights $\leftarrow$ fibre sums.}
\label{fig:mlpgraph}
\end{figure}

In a convolutional neural network (CNN), it is convenient to represent the network as a graph where the nodes are the channels of the network, and the arcs are the convolutional kernels that connect them. The weights on the arcs are kernels of appropriate sizes. A detailed explanation is given in the caption of Figure~\ref{fig:cnngraph}.

\begin{figure}
  \centering 
  \resizebox{\netgraphwidth}{!}{
\begin{tikzpicture}[>=Stealth, font=\small]

\newcommand{\stack}[6]{
  \foreach \i in {0,...,#3} {
    \draw[fill=#5!30,draw=#5!70!black,thick] ({#1+0.12*\i},{#2-0.12*\i}) rectangle ++(#4,#4);
  }
  \node[align=center,font=\footnotesize] at ({#1+#4/2+0.06*#3},{#2-0.12*#3-0.35}) {#6};
}
\node[font=\bfseries] at (7.5,3.6) {(a) The CNN network: a pile of same-sized arrays at every stage};
\stack{0}{1.6}{0}{1.6}{gray}{input\\ $1\times28\times28$}
\stack{3.2}{1.5}{4}{1.35}{blue}{conv1 (16 maps)\\ $16\times24\times24$}
\stack{6.4}{1.55}{4}{0.75}{blue}{pool\\ $16\times12\times12$}
\stack{9.2}{1.6}{7}{0.5}{red}{conv2 (32 maps)\\ $32\times8\times8$}
\stack{12.4}{1.7}{7}{0.28}{red}{pool\\ $32\times4\times4$}
\draw[->,thick] (1.75,2.2) -- node[above,font=\footnotesize,align=center] {conv $5{\times}5$\\ 16 kernels} (3.1,2.2);
\draw[->,thick] (5.2,2.2) -- node[above,font=\footnotesize] {max-pool} (6.3,2.2);
\draw[->,thick] (7.8,2.2) -- node[above,font=\footnotesize,align=center] {conv $5{\times}5$\\ $16{\times}32=512$ kernels} (9.1,2.2);
\draw[->,thick] (10.8,2.2) -- node[above,font=\footnotesize] {max-pool} (12.3,2.2);
\node[font=\footnotesize,align=center,text=blue!70!black] at (3.9,-1.0) {same kernel $K_{f,c}$ slid over\\ all positions of a map};
\draw[->,blue!70!black,thin] (3.9,-0.6) -- (4.1,0.55);
\draw[->,thick] (13.0,0.15) -- node[right,font=\footnotesize,align=left] {all entries} (13.0,-0.55);
\draw[fill=green!25,draw=green!60!black,thick] (12.3,-1.05) rectangle ++(1.4,0.5);
\node[font=\footnotesize,anchor=east] at (12.1,-0.8) {flatten $512$};
\draw[->,thick] (13.0,-1.1) -- node[right,font=\footnotesize,align=left] {fc $512{\times}128$} (13.0,-1.85);
\draw[fill=orange!30,draw=orange!70!black,thick] (12.55,-2.35) rectangle ++(0.9,0.5);
\node[font=\footnotesize,anchor=east] at (12.1,-2.1) {fc1 $128$};
\draw[->,thick] (13.0,-2.4) -- node[right,font=\footnotesize,align=left] {fc $128{\times}10$} (13.0,-3.15);
\draw[fill=violet!30,draw=violet!70!black,thick] (12.75,-3.65) rectangle ++(0.5,0.5);
\node[font=\footnotesize,anchor=east] at (12.1,-3.4) {out $10$};

\begin{scope}[yshift=-8.2cm]
\node[font=\bfseries] at (7.5,2.7) {(b) The $M$-graph: one node per channel/unit (a sheet of the pile), one arc per kernel or weight};
\tikzset{nd/.style={circle,draw,thick,minimum size=5mm,inner sep=0pt,font=\scriptsize}}
\node[nd,fill=gray!30] (in) at (0.8,0) {in};
\foreach \i/\y in {1/1.2,2/0.4,3/-0.4} \node[nd,fill=blue!25] (c1\i) at (4.0,\y) {$c_{\i}$};
\node at (4.0,-1.05) {$\vdots$}; \node[nd,fill=blue!25] (c1n) at (4.0,-1.8) {$c_{16}$};
\foreach \i/\y in {1/1.2,2/0.4,3/-0.4} \node[nd,fill=red!25] (c2\i) at (7.2,\y) {$d_{\i}$};
\node at (7.2,-1.05) {$\vdots$}; \node[nd,fill=red!25] (c2n) at (7.2,-1.8) {$d_{32}$};
\foreach \i/\y in {1/1.2,2/0.4,3/-0.4} \node[nd,fill=orange!30] (u\i) at (10.4,\y) {$u_{\i}$};
\node at (10.4,-1.05) {$\vdots$}; \node[nd,fill=orange!30] (un) at (10.4,-1.8) {$u_{128}$};
\foreach \i/\y in {1/0.9,2/0.1} \node[nd,fill=violet!25] (o\i) at (13.6,\y) {$o_{\i}$};
\node at (13.6,-0.65) {$\vdots$}; \node[nd,fill=violet!25] (on) at (13.6,-1.4) {$o_{10}$};
\node[nd,fill=yellow!40,minimum size=6mm] (bias) at (7.2,-3.4) {bias};
\foreach \t in {c11,c12,c13,c1n} \draw[->,blue!70!black] (in) -- (\t);
\foreach \s in {c11,c12,c13,c1n} \foreach \t in {c21,c22,c23,c2n} \draw[->,red!60!black,thin,opacity=0.5] (\s) -- (\t);
\foreach \s in {c21,c22,c23,c2n} \foreach \t in {u1,u2,u3,un} \draw[->,orange!80!black,thin,opacity=0.5] (\s) -- (\t);
\foreach \s in {u1,u2,u3,un} \foreach \t in {o1,o2,on} \draw[->,violet!70!black,thin,opacity=0.5] (\s) -- (\t);
\foreach \t in {c1n,c2n,un,on} \draw[->,yellow!60!black,dashed] (bias) -- (\t);
\node[align=center,font=\footnotesize,blue!70!black] at (2.4,2.0) {16 arcs\\ label $=K_{f,\mathrm{in}}\in\mathbb{R}^{5\times5}$};
\node[align=center,font=\footnotesize,red!60!black] at (5.6,2.0) {$16{\times}32=512$ arcs\\ label $=K_{d,c}\in\mathbb{R}^{5\times5}$};
\node[align=center,font=\footnotesize,orange!80!black] at (8.8,2.05) {$32{\times}128=4096$ arcs\\ label $\in\mathbb{R}^{16}$\\ ($4{\times}4$ positions of $d$ read by $u$)};
\node[align=center,font=\footnotesize,violet!70!black] at (12.0,2.0) {$128{\times}10=1280$ arcs\\ label $\in\mathbb{R}$};
\node[align=left,font=\footnotesize,yellow!50!black,anchor=west] at (7.8,-3.4) {bias arcs to every hidden/output node, label $=b\in\mathbb{R}$};
\begin{scope}[on background layer]
 \node[rounded corners,draw=blue!50,dashed,fit=(c11)(c1n),inner sep=4pt,label={[blue!60,font=\scriptsize]below:initial class ``layer 1''}] {};
 \node[rounded corners,draw=red!50,dashed,fit=(c21)(c2n),inner sep=4pt,label={[red!60,font=\scriptsize]below:``layer 2''}] {};
 \node[rounded corners,draw=orange!60,dashed,fit=(u1)(un),inner sep=4pt,label={[orange!70!black,font=\scriptsize]below:``layer 3''}] {};
\end{scope}
\node[font=\scriptsize,gray] at (0.8,-0.75) {frozen};
\node[font=\scriptsize,gray] at (13.6,-2.05) {frozen};
\end{scope}
\end{tikzpicture}}
  \caption{LeNet-5-style~\cite{LeBBH98} CNN for the MNIST classification problem. Above, the usual representation.
  Here $M=(\mathbb{R}^{5\times5},+)\oplus(\mathbb{R}^{16},+)\oplus(\mathbb{R},+)$, using $\ell^1$-distance on each component (with per-layer scale, like in the case of the MLP).
  Pooling is not in the graph: it acts inside a channel (spatially), and the pixel-level graph fibres exactly onto this one.}
\label{fig:cnngraph}
\end{figure}

\section{Experimental results}\label{sec:exp}

\begin{figure*}[tb]
  \centering
\begin{tikzpicture}
\begin{groupplot}[
  group style={group size=3 by 1, horizontal sep=1.1cm},
  width=0.35\textwidth, height=0.21\textwidth,
  xlabel={$\epsilon$},
  ymin=0, ymax=1.05, ytick={0,0.2,...,1},
  grid=major, grid style={gray!25},
  tick label style={font=\scriptsize}, label style={font=\scriptsize},
  title style={font=\scriptsize, yshift=-1ex},
  legend style={font=\tiny, at={(0.03,0.03)}, anchor=south west, draw=gray!50, fill=white, fill opacity=0.85, text opacity=1},
  every axis plot/.append style={line width=0.8pt, mark size=1.6pt},
]
\nextgroupplot[title={MNIST MLP 784--300--100--10}, xmin=0, xmax=1.0, ylabel={compression and accuracy}]
  \fill[orange!20] (axis cs:0.35,0) rectangle (axis cs:0.5,1.05);
  \addplot[blue!70!black, mark=*] table[x=eps, y=unitfrac, col sep=comma] {fig/data/frontier_mlp.csv};
  \addplot[red!70!black, mark=square*] table[x=eps, y=acc, col sep=comma] {fig/data/frontier_mlp.csv};
  \addplot[black, densely dotted, no marks, forget plot] coordinates {(0,0.9776) (1.0,0.9776)};
  \legend{surviving units (fraction), test accuracy}
\nextgroupplot[title={MNIST CNN 16c5--32c5--128fc}, xmin=0, xmax=1.0]
  \fill[orange!20] (axis cs:0.35,0) rectangle (axis cs:0.5,1.05);
  \addplot[blue!70!black, mark=*] table[x=eps, y=unitfrac, col sep=comma] {fig/data/frontier_cnn.csv};
  \addplot[red!70!black, mark=square*] table[x=eps, y=acc, col sep=comma] {fig/data/frontier_cnn.csv};
  \addplot[black, densely dotted, no marks, forget plot] coordinates {(0,0.9909) (1.0,0.9909)};
\nextgroupplot[title={CIFAR-10 VGG16-BN}, xmin=0.28, xmax=0.62]
  \fill[orange!20] (axis cs:0.35,0) rectangle (axis cs:0.5,1.05);
  \addplot[blue!70!black, mark=*] table[x=eps, y=unitfrac, col sep=comma] {fig/data/frontier_vgg.csv};
  \addplot[red!70!black, mark=square*] table[x=eps, y=acc, col sep=comma] {fig/data/frontier_vgg.csv};
  \addplot[black, densely dotted, no marks, forget plot] coordinates {(0.28,0.9416) (0.62,0.9416)};
\end{groupplot}
\end{tikzpicture}
  \caption{Fraction of surviving units (circles) and test accuracy (squares) of the compressed networks as a function of $\epsilon$, for the three networks of Table~\ref{tab:compression-results}; the dotted line is the accuracy of the uncompressed network, and the shaded band $0.35\le\epsilon\le0.5$ contains the phase transition.}
  \label{fig:compression-results}
\end{figure*}

For our experiments of neural network compression, we used three trained networks: two for the MNIST dataset (the LeNet-300-100 MLP and LeNet-5 CNN architectures) and one for the CIFAR-10 dataset~\cite{KriLML}. The compression was done post-training, so it is not comparable to the method described in~\cite{VePRFS}. This time, we simply converted the trained networks into $M$-graphs, and then we applied our $\epsilon$-approximate $M$-fibration algorithm (Algorithm~\ref{alg:tolerant}) to compress them, and then finally re-converted the compressed graphs back into neural networks, and tried them on the test set to measure their performance. The results are summarized in Table~\ref{tab:compression-results}, and represented in Figure~\ref{fig:compression-results}. 

For these networks, we observed a phase transition in the accuracy of the compressed networks: for small values of $\epsilon$, the accuracy remains almost unchanged, and the size (in number of units and parameters) decreases, but when $\epsilon$ reaches a critical value (between $0.35$ and $0.5$ in all three cases), the accuracy drops dramatically. 

The sharp transition is a property of the certified, worst-case criterion, not of the networks: heuristics that cluster units by tight (complete-linkage) groups instead of filling the defect budget of every node, and choosing as representative not a metric center but the mean or the least-squares solution, trade the guarantee for a smooth, gracefully degrading accuracy-versus-size curve. These \emph{ad hoc} heuristics, although interesting in practice, fall beyond the scope of this paper, which is focused on the theoretical foundations of the method.

\begin{table}[htb]
  \centering
    \begin{tabular}{@{}l l r@{\,}r r@{\,}r r@{}}
    \toprule
    network & $\epsilon$ & \multicolumn{2}{c}{units} & \multicolumn{2}{c}{parameters} & accuracy\\
    \midrule
    MLP      & $0$    & $400$  & ($100\%$) & $267$k   & ($100\%$) & $97.76\%$\\
             & $0.4$  & $305$  & ($76\%$)  & $210$k   & ($79\%$)  & $97.53\%$\\
             & $0.5$  & $169$  & ($42\%$)  & $121$k   & ($45\%$)  & $79.6\%$\\
    \midrule
    CNN      & $0$    & $176$  & ($100\%$) & $80.2$k  & ($100\%$) & $99.09\%$\\
             & $0.4$  & $166$  & ($94\%$)  & $74.7$k  & ($93\%$)  & $98.89\%$\\
             & $0.5$  & $18$   & ($10\%$)  & $2.1$k   & ($2.6\%$) & $14.7\%$\\
    \midrule
    VGG16-BN & $0$    & $5248$ & ($100\%$) & $15.25$M & ($100\%$) & $94.16\%$\\
             & $0.30$ & $3610$ & ($69\%$)  & $8.38$M  & ($55\%$)  & $91.53\%$\\
             & $0.35$ & $3233$ & ($62\%$)  & $6.75$M  & ($44\%$)  & $90.18\%$\\
             & $0.40$ & $2493$ & ($48\%$)  & $4.32$M  & ($28\%$)  & $10.0\%$\\
    \bottomrule
    \end{tabular}
    \caption{Certified $\epsilon$-approximate $M$-fibrations.}
\label{tab:compression-results}
\end{table}

\section{Reproducibility}

For reproducibility, we provide a self-contained Python library that implements the algorithms described in this paper; furthermore, the library contains a script that is able to take any ONNX neural network and compress it using the $\epsilon$-approximate $M$-fibration algorithm, testing it for accuracy and reporting the results. The library is available at \texttt{\url{https://github.com/boldip/mfib}}.

\section{Conclusions}

We introduced a general framework to study fibrations of weighted graphs, and we proved that the minimum base of a weighted graph is unique up to isomorphism. We also introduced the notion of $\epsilon$-approximate fibration, and we showed that it can be used to compress neural networks. 

\section*{\texorpdfstring{\chg{Author contributions}}{Author contributions}}

\chg{The theory of $M$-fibrations presented in this paper (the notions of $M$-graph, $M$-fibration and $\epsilon$-approximate $M$-fibration, and their properties, Sections~\ref{sec:notations}\mbox{--}\ref{sec:approx-equitable}) is due to P.B. The application of this theory to the compression of neural networks (Sections~\ref{sec:application}\mbox{--}\ref{sec:exp}) is due to all authors: it builds on~\cite{VePRFS} and on~\cite{VePHM}, which established $\epsilon$-fibers in weighted graphs, the compression rule based on mean weights, and the application of these ideas to neural network compression, and whose approach was available to P.B. as a private communication prior to publication.}

\section*{Acknowledgement}

\chg{We thank Ian Stewart and Sebastiano Vigna for many fruitful discussions on the topic of this paper.}

\bibliography{biblio,extra}

\end{document}